\documentclass{article} %

\usepackage{preprint,times} %

\usepackage{amsmath,amsfonts,bm}

\def\eqref#1{equation~\ref{#1}}

\def\1{\bm{1}}

\def\mW{{\bm{W}}}

\DeclareMathAlphabet{\mathsfit}{\encodingdefault}{\sfdefault}{m}{sl}
\SetMathAlphabet{\mathsfit}{bold}{\encodingdefault}{\sfdefault}{bx}{n}

\newcommand{\R}{\mathbb{R}}

\DeclareMathOperator*{\argmax}{arg\,max}

\usepackage{hyperref}
\usepackage{url}

\usepackage{xcolor}         %
\definecolor{burntorange}{rgb}{0.81,.33,0}
\definecolor{darkgreen}{rgb}{0.1,0.6,0.1}

\newcommand{\ShowNotes}{}
\ifdefined\ShowNotes
  \newcommand{\colornote}[3]{{\color{#1}[\textbf{#2} #3\normalfont]}}
\else
  \newcommand{\colornote}[3]{}
\fi

\usepackage{xspace}
\newcommand{\ourmethod}{Gefen\xspace}
\newcommand{\llamaoneb}{Llama 3-1.5B\xspace}
\newcommand{\gptsmall}{GPT-2-125M\xspace}
\newcommand{\gptoneb}{GPT2-1B\xspace}
\usepackage{graphicx}
\usepackage{wrapfig}
\usepackage{tikz}
\usetikzlibrary{spy}
\usepackage{float}
\usepackage{amsthm}
\usepackage[strings]{underscore}
\usepackage{titletoc}
\usepackage{cleveref}
\newtheorem{theorem}{Theorem}[section]
\crefname{theorem}{Theorem}{Theorems}
\Crefname{theorem}{Theorem}{Theorems}
\crefname{figure}{Figure}{Figures}
\Crefname{figure}{Figure}{Figures}
\crefname{table}{Table}{Tables}
\Crefname{table}{Table}{Tables}
\crefname{equation}{Equation}{Equations}
\Crefname{equation}{Equation}{Equations}
\crefname{section}{Section}{Sections}
\Crefname{section}{Section}{Sections}
\crefname{appendix}{Appendix}{Appendices}
\Crefname{appendix}{Appendix}{Appendices}

\usepackage{pifont}
\usepackage{algorithm}
\crefname{algorithm}{Algorithm}{Algorithms}
\Crefname{algorithm}{Algorithm}{Algorithms}
\usepackage[noend]{algorithmic}
\usepackage{graphicx}
\renewcommand{\algorithmiccomment}[1]{\bgroup\hfill $\triangleright$ ~#1\egroup}
\usepackage{hyperref}
\usepackage{siunitx}
\usepackage{pdfrender}
\newcommand{\boldnum}[2][]{%
  \textpdfrender{TextRenderingMode=FillStroke,LineWidth=0.15pt}{\num[#1]{#2}}%
}

\newcommand{\numsub}[2][]{%
  \num[
    uncertainty-mode = full,
    output-open-uncertainty = {_\bgroup\scriptstyle\pm\,},
    output-close-uncertainty = {\egroup},
    #1
  ]{#2}%
}

\newcommand{\boldnumsub}[2][]{%
  \boldnum[
    uncertainty-mode = full,
    output-open-uncertainty = {_\bgroup\scriptstyle\pm\,},
    output-close-uncertainty = {\egroup},
    #1
  ]{#2}%
}

\title{Gefen: Optimized Stochastic Optimizer}

\author{
Nadav Benedek \thanks{Correspondence to \texttt{nadav.benedek@post.runi.ac.il}.}\\
Reichman University \\
\And
Tomer Koren \\
Tel Aviv University, Google Research \\
\And
Ohad Fried \\
Reichman University \\
}

\iclrfinalcopy  %

\begin{document}

\maketitle

\begin{abstract}
AdamW is a default optimizer for modern deep learning, but its first and second moment states add roughly two parameter-sized buffers to training memory, increasing the already substantial cost of large-scale pretraining.
We propose \ourmethod, a memory-efficient optimizer that automatically shares second-moment estimates across parameter blocks and quantizes the first moment using a learned codebook, thereby reducing AdamW's memory footprint by \(\mathord{\sim}8\times\) while maintaining the same performance, corresponding to a reduction of 6.5 GiB per billion parameters.
The method is motivated by a theoretical result showing that large mixed Hessian entries constrain the ratio of squared gradients toward one, suggesting that Hessian-aligned parameters are natural candidates for sharing second-moment statistics.
Since computing Hessians is impractical at scale, \ourmethod infers block structure from the initial squared gradients, requiring no architecture-specific metadata or hyperparameters beyond AdamW defaults. \ourmethod learns an exact histogram-based dynamic-programming quantization codebook and reuses the same blocks for first-moment scaling. Across diverse pretraining experiments, \ourmethod achieves the lowest peak optimizer memory among the compared AdamW-like methods while maintaining AdamW-level performance. 
In single-machine or distributed training, the reduced memory footprint enables larger microbatches and improves throughput significantly over AdamW, providing a practical drop-in replacement with lower memory usage that can increase throughput and enable training larger models or using larger global batch sizes. 
We provide the complete Python implementation, including fused CUDA kernels at \url{https://github.com/ndvbd/Gefen}.
\end{abstract}

\section{Introduction}

Adam \citep{kingma2015adam} and AdamW \citep{loshchilov2017decoupled} are widely used workhorse optimizers in deep learning due to their strong and consistent performance across a broad range of architectures and datasets. However, because they store both first and second moment moving averages, their optimizer state alone adds roughly twice the parameter memory. 

Reducing optimizer memory can enable training larger models or using larger batch sizes, which can in turn improve training throughput and final model quality. Therefore, we are motivated to design an optimizer that maintains AdamW-like performance while reducing memory usage. For example, Adam-mini \citep{zhang2025adam} is a recent variant of Adam that reduces optimizer memory by \textit{sharing} second-moment values using manual rules that mimic the block diagonal structure of parameters' Hessians, but a theoretical justification for \textit{why} Hessian-aligned grouping is beneficial remains missing. In practice, users may need to provide architecture-specific information (e.g., the number of attention heads), and grouping rules depend on tensor names, which may vary across implementations. Quantized optimizers \citep{dettmers2022eightbit,li2023fourbit} reduce memory, but rely on manually crafted, suboptimal codebooks and introduce an implicit hyperparameter through the quantization block size.

Inspired by these methods, our goal is to remove manual grouping requirements, avoid Adam-mini's tensor-name-dependent decisions, provide stronger theoretical grounding for grouping parameters with high Hessian-related affinity, validate on both LLMs and non-LLM models, and further reduce the memory footprint beyond Adam-mini while maintaining general applicability and AdamW-level performance.

In \cref{sec:desiderata}, we describe the optimizer desiderata, which we use to guide our design and evaluation. In \cref{sec:theory}, we provide theoretical analysis showing that high Hessian affinity implies similar squared gradients, which justifies our grouping strategy. In \cref{sec:algorithm}, we present our algorithm, which automatically groups parameters and shares second moments within these groups. 
Empirically, our method achieves the lowest memory footprint among the compared methods across a variety of models and datasets, while maintaining comparable or improved quality and throughput, as shown in \cref{sec:experiments}. Readers familiar with AdamW may skip directly to \cref{sec:theory}.

\section{Optimizer Desiderata}
\label{sec:desiderata}

Prior work on optimizers varies in the criteria it evaluates, the model architectures it targets, and the metrics it reports, making cross-paper comparisons challenging.
Here, we set forth desiderata for an optimizer: their order does not imply importance, and some elements may be more or
less relevant depending on the use case. In the following, we use
\emph{performance} to denote the relevant evaluation metric, such as validation or training
loss.

\begin{itemize}

    \item \textbf{Peak Memory Footprint}. The additional memory beyond model parameters, activations, and gradients (in the case of non-zeroth-order optimizers) should be as small as possible.
    For instance, a model with 1B parameters stores the same number of gradients, and Adam additionally stores the same number of first and second moment values, resulting in two parameter-sized buffers of extra optimizer state.
    Since hardware memory is typically fixed during training, peak memory footprint is a primary constraint on trainable model size, maximum batch size, and often throughput. Note that peak memory footprint is different from the persistent memory of the optimizer, as the optimizer can temporarily allocate intermediates, thereby requiring more memory than the final persistent state.

    \item \textbf{Performance versus Data}. A main goal of an optimizer is to get to the best possible performance as fast as possible (convergence speed). However, a graph of performance versus wall-clock time depends on the machine configuration and is better decomposed into two measurements: (1) performance versus data, which is hardware-agnostic and can be reproduced even on different machines, and (2) throughput in units of $data/time$.
    Combining the two gives performance versus time. Plots of \emph{performance} versus iteration count or clock time are often less informative, since different methods may process different amounts of data per step (e.g., due to different batch sizes), which makes comparisons less intuitive.
    Furthermore, in many cases, training data is limited, so the primary objective is to maximize performance for a fixed data budget (sample efficiency), and the x-axis should represent the amount of training data consumed rather than time.
    For language models, a typical comparison shows the \emph{performance} (e.g., validation loss) as a function of the amount of training data (e.g., tokens or epochs) consumed.

    \item \textbf{Throughput}. Throughput in units of $data/time$ (e.g., tokens/second) can be estimated from shorter runs under the assumption that it remains approximately stable during training. Combining performance-versus-data ($performance/data$) with throughput ($data/time$) yields performance-versus-time. Note that $time$ usually refers to a full training iteration, not only the optimizer step time. $time$ depends on the specific machine configuration and therefore allows us to compare different methods on the same machine but not across different machines. The optimizer step time in training usually accounts for a small fraction of the overall iteration time, which includes the forward and backward passes, data loading and host-to-device memory transfer. Therefore, modest differences in optimizer-step latency are rarely the primary training bottleneck and may have little effect on end-to-end iteration time. However, an optimizer that enables larger batch sizes, reduces communication overhead in distributed training, or reaches comparable performance in fewer iterations can still deliver substantial practical gains.

    \item \textbf{Generality}. A good optimizer should perform robustly across diverse use cases, model architectures, datasets, and hardware configurations. It should require minimal optimizer-specific hyperparameters or manual configuration choices.

    \item \textbf{Theoretical Grounding}. Although many theoretical analyses of optimizers rely on strong assumptions that may not hold in practice (e.g., convexity of the loss landscape), theoretical grounding can still provide useful intuition and increase confidence in the method, even in constrained settings.
    
    \item \textbf{Distributability}. A good optimizer should be compatible with common distributed-training frameworks and sharding strategies, including DDP, FSDP, and others.
    
\end{itemize}

In this paper, we seek to design an optimizer with a \textit{lower peak memory} footprint than AdamW and similar or better \textit{performance} and \textit{throughput}, while remaining \textit{general} and \textit{distributable}.

\section{Preliminaries and Related Work}
\label{sec:related_work}
\paragraph{Adam and AdamW.}
Combining ideas from AdaGrad \citep{duchi2011adagrad} and RMSProp \citep{tieleman2012rmsprop}, Adam is an adaptive first-order optimizer that maintains exponential moving averages of the gradient and squared gradient. At step $t$, with gradient $g_t=\nabla_\theta \mathcal{L}(\theta_t)$, Adam computes
\begin{align}
m_t &= \beta_1 m_{t-1} + (1-\beta_1) g_t
&& \text{(EMA of first moment; momentum term)} \\
v_t &= \beta_2 v_{t-1} + (1-\beta_2) g_t^2
&& \text{(EMA of second raw moment)}
\end{align}
then applies bias correction, which is needed because both EMAs are initialized at zero and are therefore biased toward zero at early steps,
\begin{align}
\hat m_t &= \frac{m_t}{1-{\beta_1}^t}, \qquad
\hat v_t = \frac{v_t}{1-{\beta_2}^t} && \text{(Bias correction)} 
\end{align}
and updates parameters as
\begin{align}
\theta_{t+1} = \theta_t - \eta \frac{\hat m_t}{\sqrt{\hat v_t}+\epsilon}
\end{align}
AdamW uses the same adaptive moments, but applies weight decay directly in parameter space rather than injecting an $L_2$ penalty term into the gradient:
\begin{align}
\theta_{t+1} = \theta_t - \eta \frac{\hat m_t}{\sqrt{\hat v_t}+\epsilon} - \eta \lambda \theta_t
\end{align}
This decoupling avoids interactions between adaptive gradient scaling and regularization strength, and typically yields better generalization and more stable hyperparameter tuning.
In modern deep learning practice, Adam and AdamW are the de facto standard optimizers. They keep first and second moment states, typically in FP32, so optimizer-state memory is roughly $8$ bytes per parameter. For example, for a 13B-parameter model, Adam-style optimizer states alone require about $13\times 10^9 \times 8 \approx 97$ GiB; this can make training more difficult on memory-constrained GPUs and force smaller batch sizes, which may also reduce training throughput.

\paragraph{Quantized optimizer states.}
Recent work reduces optimizer memory by quantizing Adam-style moment states. Adam8bit \citep{dettmers2022eightbit} flattens tensors into fixed-size blocks ($B=2048$), quantizes both first and second moments, and stores per-block absolute-max scaling (recomputed each step) together with 8-bit values per element. Its Dynamic Tree Quantization encodes normalized values in $[-1,1]$, providing higher precision near zero while still representing larger magnitudes. However, this design has practical limitations: (1) the dynamic-tree quantizer can be wasteful due to duplicated codebook entries, (2) in practice the method is not applicable to all tensor types (e.g., embeddings should not be quantized), so it is not fully plug-and-play as a universal drop-in, and (3) the fixed user-chosen block size introduces an additional hyperparameter burden. Adam4bit \citep{li2023fourbit} pushes state quantization to 4 bits with improved handling of outliers (including finer-grained and asymmetric treatments for moment statistics), further reducing memory while preserving competitive accuracy in many settings. The method uses linear quantization and includes a block-size hyperparameter, which defaults to $128$ in the authors' implementation.

See \cref{app:extended_related} for an extended related work discussion, including additional optimizers and gradient communication compression methods. In this work, we focus on general-purpose optimizers that require no manual configuration, additional hyperparameters, or learning-rate retuning, and that can be used as drop-in replacements for AdamW across training regimes while achieving a significantly lower memory footprint.

\section{Theoretical Analysis}
\label{sec:theory}
AdamW maintains, for each parameter, an exponential moving average of its
squared gradient. We analyze conditions under which two weights with a
large-magnitude mixed Hessian entry also have similar squared-gradient
magnitudes, making them natural candidates for sharing their second-moment
estimates. 
Specifically, for two entries in a parameter tensor, we study when their squared gradients are close.
Such a relationship motivates grouping parameter pairs with
large Hessian-based coupling and sharing their second-moment estimates.

\begin{theorem}[Large Hessian entries contract squared-gradient ratios]
\label{thm:hessian-gradient-ratio}
Consider the two-layer MLP
$z=W_1x+b_1$, $y=W_2\sigma(z)+b_2$, where $y\in\mathbb{R}^{d_{\mathrm{out}}}$,
with MSE loss
$L=d_{\mathrm{out}}^{-1}\sum_{j=1}^{d_{\mathrm{out}}} (y_j-t_j)^2$.
Let $e_j:=y_j-t_j$.
Define:
\[
\begin{aligned}
&A_k:=\sum_{i=1}^{d_{\mathrm{out}}} e_i W_2[i,k],\quad
U_{k,l}:=|x_l\sigma'(z_k)|,
&C_{k,k'}:=\left|\frac{A_k}{A_{k'}}\right|,\quad
D_{k,k'}:=\left|\sum_i W_2[i,k]W_2[i,k']\right|
\end{aligned}
\]
Assume:
\[
\begin{aligned}
&D_{k,k'}\le \alpha,\quad
U_{k,l}\le \beta,\quad
|\log C_{k,k'}|\le \gamma,
&\forall k,l,k',l'\ \text{s.t.}\
k\neq k',\ A_k,A_{k'}\neq 0,\ H_{(k,l),(k',l')}\neq 0
\end{aligned}
\]
For any such pair of entries $W_1[k,l]$ and $W_1[k',l']$,
let $g_{k,l}:=\partial L/\partial W_1[k,l]$ and define the ratio between
their squared gradients and their mixed Hessian entry as
\[
R:=\frac{g_{k,l}^2}{g_{k',l'}^2},
\qquad
H_{(k,l),(k',l')}
:=\frac{\partial^2 L}{\partial W_1[k,l]\,\partial W_1[k',l']}
\]
Then, for $A:=2\alpha\beta^2/d_{\mathrm{out}}$,
\[
e^{-2\gamma}\left(\frac{|H_{(k,l),(k',l')}|}{A}\right)^2
\le
R
\le
e^{2\gamma}\left(\frac{A}{|H_{(k,l),(k',l')}|}\right)^2
\]

\end{theorem}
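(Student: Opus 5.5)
We compute the gradient and the mixed Hessian entry in closed form, and then show that each of the two factors making up $R$ is controlled by the stated constants. First, by the chain rule, $\partial L/\partial y_i = 2e_i/d_{\mathrm{out}}$ and $\partial y_i/\partial z_k = W_2[i,k]\sigma'(z_k)$. Hence
\[
g_{k,l}=\frac{2}{d_{\mathrm{out}}}A_k\,\sigma'(z_k)\,x_l .
\]
Differentiating again with respect to $W_1[k',l']$ with $k\neq k'$, the only dependence of $g_{k,l}$ on $W_1[k',l']$ is through $e_i$, because $z_k$ does not depend on row $k'$ of $W_1$. Since $\partial e_i/\partial W_1[k',l'] = W_2[i,k']\sigma'(z_{k'})x_{l'}$, we get
\[
H_{(k,l),(k',l')}=\frac{2}{d_{\mathrm{out}}}\,\sigma'(z_k)x_l\,\sigma'(z_{k'})x_{l'}\sum_i W_2[i,k]W_2[i,k'] .
\]
This gives $|H|=\frac{2}{d_{\mathrm{out}}}U_{k,l}U_{k',l'}D_{k,k'}$. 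The hypotheses $H\neq 0$ and $A_k,A_{k'}\neq0$ guarantee that every quantity below is nonzero, so the ratios are well defined.

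Next, we factor the squared-gradient ratio as
\[
R=C_{k,k'}^2\,\frac{U_{k,l}^2}{U_{k',l'}^2}.
\]
The assumption $|\log C|\le\gamma$ gives $e^{-2\gamma}\le C^2\le e^{2\gamma}$, so it remains to bound $U_{k,l}/U_{k',l'}$ in both directions using $|H|$. From the Hessian formula together with $D\le\alpha$, we have $U_{k,l}U_{k',l'}\ge |H|d_{\mathrm{out}}/(2\alpha)$. Combining this with $U_{k',l'}\le\beta$ yields $U_{k,l}\ge |H|d_{\mathrm{out}}/(2\alpha\beta)$. Dividing by $U_{k',l'}\le\beta$ then gives
\[
\frac{U_{k,l}}{U_{k',l'}}\ge \frac{|H|d_{\mathrm{out}}}{2\alpha\beta^2}=\frac{|H|}{A}.
\]
The symmetric argument, with the roles of the two entries swapped, gives $U_{k',l'}/U_{k,l}\ge |H|/A$, that is, $U_{k,l}/U_{k',l'}\le A/|H|$. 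Squaring these bounds and multiplying by the bounds on $C^2$ yields exactly the two-sided estimate in \cref{thm:hessian-gradient-ratio}. Note that $|H|\le A$ follows automatically from the same inequalities, so the lower bound never exceeds the upper bound.

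The only delicate step is the Hessian computation, in particular checking that $\partial z_k/\partial W_1[k',l']=0$ for $k\neq k'$. This is what removes the term involving $\sigma''$ and leaves $H$ proportional to $D_{k,k'}$ times the two $U$ factors. It is also the reason the theorem is restricted to pairs with $k\neq k'$. Everything after this is elementary manipulation of the bounds $D\le\alpha$, $U\le\beta$ and $|\log C|\le\gamma$.
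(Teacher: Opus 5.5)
Your proposal is correct and takes essentially the same route as the paper. Like the paper, you compute $g_{k,l}$ and the $k\neq k'$ Hessian entry, factor $R=C_{k,k'}^2\,(U_{k,l}/U_{k',l'})^2$, and use $D_{k,k'}\le\alpha$ and $U\le\beta$ to confine the $U$-ratio to $[\,|H_{(k,l),(k',l')}|/A,\;A/|H_{(k,l),(k',l')}|\,]$; the paper writes this step as the log-bound $|\log(U_{k,l}/U_{k',l'})|\le\log(A/|H_{(k,l),(k',l')}|)$, derived from $U_{k,l},U_{k',l'}\in[\,|H_{(k,l),(k',l')}|\,d_{\mathrm{out}}/(2\alpha\beta),\,\beta\,]$, which is the same estimate.
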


The theorem implies that for feasible $|H_{(k,l),(k',l')}|\le A$, larger mixed Hessian magnitude
tightens the admissible range of the squared-gradient ratio around $1$.
The same form of contraction also holds for second-layer weights that share
the same output coordinate and for hidden-bias terms, whereas output-bias Hessian
entries are either zero or constant. 
Empirically, the same phenomenon is observed in real network architectures, including attention tensors and convolutional layers, as shown in Appendix~\ref{app:empirical-evidence}.
\begin{theorem}[Within-layer Hessian affinity controls gradient magnitudes] \label{thm:gauss-newton}
Let \(f_\theta(x)\in\mathbb{R}^m\) be a feedforward ReLU network, and let \(L(\theta)=\ell(f_\theta(x))\), where \(\ell:\mathbb{R}^m\to\mathbb{R}\) is twice differentiable and convex.
Write \(g=\nabla_\theta L(\theta)\), \(H=\nabla_\theta^2L(\theta)\), and \(z=f_\theta(x)\), and assume that $\nabla \ell(z)$ is in the column span of $\nabla^2 \ell(z)$.
Further assume that \(\theta\) is away from ReLU activation boundaries for this input, i.e., all preactivations are nonzero.
Then for any two parameters \(i,j\) belonging to the same layer, we have \(H_{ii} \geq 0, H_{jj} \geq 0\), and for the normalized Hessian affinity
\(
\rho_{ij} = {|H_{ij}|}/{\sqrt{H_{ii}H_{jj}}}
\)
we have
\[
\left|
\frac{g_i^2}{H_{ii}}
-
\frac{g_j^2}{H_{jj}}
\right|
\le
2\lambda(z)^2\sqrt{2(1-\rho_{ij})},
\]
where $\lambda(z)$ is the Newton decrement at $z$, namely
\(
\lambda(z)^2
=
\nabla\ell(z)^\top
\bigl(\nabla^2\ell(z)\bigr)^\dagger
\nabla\ell(z)
.
\)
Here, we use the convention that $g_i^2/H_{ii}=0$ whenever $H_{ii}=0$, and similarly $\rho_{ij} = 0$ whenever either $H_{ii}=0$ or $H_{jj}=0$.

\end{theorem}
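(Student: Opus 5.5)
The approach is a Gauss--Newton decomposition followed by a geometric argument in the output space. Write \(J=\partial z/\partial\theta\in\mathbb{R}^{m\times p}\) and let \(J_i\) be the column belonging to parameter \(i\). The Hessian splits as
\[
H = J^\top \nabla^2\ell(z) J + \sum_{a=1}^m \partial_a\ell(z)\,\nabla_\theta^2 z_a .
\]
Away from activation boundaries the network is piecewise multilinear in the weights. Within one layer, the output is affine in that layer's parameters, since ReLU is locally linear and the activation pattern is fixed. Hence \(\partial^2 z_a/\partial\theta_i\partial\theta_j=0\) whenever \(i,j\) lie in the same layer. So for same-layer pairs only the Gauss--Newton term survives:
\[
H_{ij}=J_i^\top M J_j, \qquad M:=\nabla^2\ell(z)\succeq 0 .
\]
Convexity of \(\ell\) then gives \(H_{ii}=J_i^\top M J_i\ge 0\), and likewise \(H_{jj}\ge 0\).

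Next I move everything into the inner-product space defined by \(M\). Set \(u_i:=M^{1/2}J_i\). Then \(H_{ij}=\langle u_i,u_j\rangle\) and \(H_{ii}=\|u_i\|^2\). By hypothesis \(\nabla\ell(z)=Mw\) for some \(w\); take \(w=M^\dagger\nabla\ell(z)\) and define \(v:=M^{1/2}w\). Then
\[
g_i=J_i^\top\nabla\ell=J_i^\top M w=\langle u_i,v\rangle, \qquad \|v\|^2=\nabla\ell^\top M^\dagger\nabla\ell=\lambda(z)^2 .
\]
With unit vectors \(\hat u_i=u_i/\|u_i\|\), this gives \(g_i^2/H_{ii}=\langle \hat u_i,v\rangle^2\), and \(\rho_{ij}=|\langle\hat u_i,\hat u_j\rangle|\). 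If \(H_{ii}=0\), then \(u_i=0\) forces \(g_i=0\), which is consistent with the stated convention. In that degenerate case the bound reduces to \(\langle \hat u_j,v\rangle^2\le\lambda^2\le 2\sqrt2\,\lambda^2\), so it holds trivially.

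The core inequality is now elementary, and the main care needed is the sign ambiguity in \(\rho_{ij}\), which uses an absolute value. Replace \(\hat u_j\) by \(s\hat u_j\), with \(s=\pm1\) chosen so that \(\langle\hat u_i,s\hat u_j\rangle=\rho_{ij}\ge0\). This flip leaves \(\langle \hat u_j,v\rangle^2\) unchanged. Then
\[
\|\hat u_i-s\hat u_j\|=\sqrt{2(1-\rho_{ij})}.
\]
Writing \(a=\langle\hat u_i,v\rangle\) and \(b=\langle s\hat u_j,v\rangle\), we get
\[
|a^2-b^2|=|a-b|\,|a+b|\le \|\hat u_i-s\hat u_j\|\,\|v\|\cdot 2\|v\| = 2\lambda(z)^2\sqrt{2(1-\rho_{ij})},
\]
by Cauchy--Schwarz applied twice. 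This is exactly the claimed bound.

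The step I expect to need the most justification is the vanishing of the residual term \(\nabla_\theta^2 z_a\) on same-layer pairs. This needs a short argument that, with a fixed activation pattern, \(z\) is affine in each single layer's weights and biases, so that cross-layer products are the only source of second derivatives.
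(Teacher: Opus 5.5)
Your proposal is correct and is essentially the paper's proof: it uses the same Gauss--Newton reduction via local affineness of each layer under a fixed ReLU pattern and the same whitened vectors $M^{1/2}J_i$, and your $v=M^{1/2}M^\dagger\nabla\ell(z)$ is exactly the paper's $r=(S^\dagger)^{1/2}q$. The only differences are cosmetic: you factor $a^2-b^2=(a-b)(a+b)$ directly after the sign flip instead of first using $\bigl||a|-|b|\bigr|\le|a-sb|$, and you explicitly check the degenerate case $H_{ii}=0$ (where $\rho_{ij}=0$ by convention), which the paper handles only by noting that $g_i=0$ there.
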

Thus, if the affinity \(\rho_{ij}\) is close to one, then the Hessian-normalized gradient magnitudes of coordinates \(i\) and \(j\) are close.
If, additionally, \(H_{ii}\) and \(H_{jj}\) are comparable, then the raw gradient magnitudes \(|g_i|\) and \(|g_j|\) are comparable up to the same curvature scale. In particular, for the squared loss \(\ell(z)=o^{-1}\|z-y\|^2\), we have
\(
\nabla\ell(z)=\frac{2}{o}(z-y),
\;
\nabla^2\ell(z)=\frac{2}{o}I,
\)
and therefore the Newton decrement satisfies 
\[
\lambda(z)^2
=
\nabla\ell(z)^\top \bigl(\nabla^2\ell(z)\bigr)^{-1}\nabla\ell(z)
=
\frac{2}{o}\|z-y\|^2
=
2\ell(z).
\]
Hence, the theorem gives, at any point such that $\ell(z)>0$,
\[
\left|
\frac{g_i^2}{H_{ii}}
-
\frac{g_j^2}{H_{jj}}
\right|
\le
4\ell(z)\sqrt{2(1-\rho_{ij})}.
\]

Proofs are in \cref{app:theory-proof}.

\section{Algorithm}
\label{sec:algorithm}

\newcommand{\automaticpartitioning}{\textsc{Automatic Block Partitioning}}
\newcommand{\exacthistcodebook}{Exact DP Quantization Codebook Learning}

\begin{algorithm}[tbp]
\caption{Gefen: our proposed algorithm for optimized stochastic optimization. Default settings are identical to AdamW. 
All operations on vectors are element-wise. The algorithm has a fused CUDA implementation to avoid unnecessary temporary memory allocations.}
\label{alg:gefen}
\footnotesize
\begin{algorithmic}[1]
\STATE{\textbf{Require:} $\alpha,\lambda\in \R$: Stepsize and Weight decay, $\beta_1, \beta_2 \in [0,1) $: Decay rates for moment estimates. $f(\theta)$: Stochastic objective function with parameters $\theta$. Initialize $t \leftarrow 0$, $\bm{G}_0=\nabla_\theta f_t(\bm{\theta}_{0})$}

\STATE{\textbf{for} each parameter-tensor $\theta$ run \hyperref[alg:automatic_partitioning]{\automaticpartitioning{}} using $\bm{G}_0$} 

\STATE{Run \hyperref[alg:exact_histogram_codebook]{\textsc{\exacthistcodebook{}}}.}

\FOR{each parameter-tensor $\theta$  }

        \STATE{\quad Initialize quantized 1\textsuperscript{st} momentum codebook $ \bm{\bar{m}}_0 \leftarrow \bm{0}$ per weight and $||\bm{m}_0||_\infty$ per block} 

        \STATE{\quad Initialize a single 2\textsuperscript{nd} moment scalar $\bar{\bm{v}}_0\leftarrow 0$ per block}  

\ENDFOR

\REPEAT

	\STATE{$t \leftarrow t + 1$}

    \FOR{each parameter-tensor $\theta$  } 

        \STATE{$\bm{g}_t \leftarrow \nabla_\theta f_t(\bm{\theta}_{t-1})$}  \COMMENT{Get gradients for the current parameter}

        \STATE{$\bm{m}_t \leftarrow \beta_1 \cdot \textsc{dequantize}(\bm{\bar{m}}_{t-1}, ||\bm{m}_{t-1}||_\infty) + (1 - \beta_1) \cdot \bm{g}_t $} \COMMENT{Update biased momentum estimate}

        \STATE{$\bm{\bar{m}}_t, ||\bm{m}_{t}||_\infty \leftarrow \textsc{Quantize}(\bm{m}_t)  $}  \COMMENT{Quantize and recalculate absmax per block}

        \STATE{$\hat{\bm{m}}_t \leftarrow \bm{m}_t/(1 - \beta_1^t) $} \COMMENT{Bias correction for momentum}

        \STATE{$\bm{v}_t \leftarrow \beta_2 \cdot \bm{v}_{t-1} + (1 - \beta_2) \cdot mean(\bm{g}^2_t) $}  \COMMENT{Update biased second moment using a mean per block}

        \STATE{$\hat{\bm{{v}}}_t \leftarrow \bm{v}_t/(1 - \beta_2^t) $}
        \COMMENT{Bias correction of compact second moment estimate}

        \STATE{$\bm{\theta}_t \leftarrow \bm{\theta}_{t-1} - \alpha \cdot \left(  \hat{\bm{m}}_t / (\sqrt{\hat{\bm{v}}_t} + \epsilon) + \lambda\bm{\theta}_{t-1} \right)$} \COMMENT{Compact $\hat{\bm{{v}}}_t$ are broadcasted to full shape}
    \ENDFOR

\UNTIL{ \textit{stopping criterion is met} }
\RETURN{optimized parameters $\bm{\theta}_t$}
\end{algorithmic}
\end{algorithm}

\begin{wrapfigure}{r}{0.60\linewidth}
\vspace{-1.2em}
\hrule
\vspace{0.4em}
\refstepcounter{algorithm}
\label{alg:automatic_partitioning}
\noindent\textbf{Algorithm~\thealgorithm: \automaticpartitioning{}}
\par\smallskip
\hrule
\vspace{0.4em}
\footnotesize
\begin{algorithmic}[1]
\STATE{\textbf{Require:} first-step gradient $\bm{g}_0$ for a parameter, $n=|\bm{g}_0|$, $\epsilon$}
\STATE{Flatten $\bm{g}_0$ and let $\mathcal{P}$ be all divisors $p$ of $n$, sorted increasingly}
\STATE{$\bm{z}\leftarrow \log(\bm{g}_0^2+\epsilon)$, $\mathcal{C}\leftarrow\emptyset$}

\FOR{each candidate period $p \in \mathcal{P}$}
    \STATE{Reshape $\bm{z}$ into $\bm{B}\in\R^{(n/p)\times p}$}
    \STATE{$\mu_i(p) \leftarrow \operatorname{mean}(\bm{B}_{i,:}) \quad \forall i\in\{1,\ldots,n/p\}$}
    \STATE{$S(p)\leftarrow \operatorname{mean}_i |\mu_{i+1}(p)-\mu_i(p)|$} \COMMENT{\cref{eq:automatic_partition_score}}
\ENDFOR

\FOR{each $p\in\mathcal{P}$ s.t. $1<p<n$}
    \IF{$S(p)$ is larger than the scores of its neighboring candidate periods}
        \STATE{Add $p$ to $\mathcal{C}$}
    \ENDIF
\ENDFOR
\STATE{\textbf{return} $\argmax_{p\in\mathcal{C}} S(p)$ if $\mathcal{C}\neq\emptyset$, otherwise $1$}

\end{algorithmic}
\vspace{0.4em}
\hrule
\vspace{-0.4em}
\end{wrapfigure}

Computing the Hessian directly in large models is not tractable. Therefore, we present \cref{alg:gefen}, a
method that partitions each parameter tensor into blocks whose squared gradients
are similar, and shares the moving average of the second moment within each block
in a fully automatic way, without relying on user-specified configuration. 
We further reduce the memory footprint of the first moment using a
\textit{learnable} and \textit{exact} dynamic programming quantization, which
\textit{reuses} the same block partitioning and therefore does not introduce an
additional scaling constant.

\begin{algorithm}[tpb]
\caption{\textsc{\exacthistcodebook{}}}
\label{alg:exact_histogram_codebook}
\footnotesize
\begin{algorithmic}[1]
\STATE{\textbf{Require:} gradients $\{\bm{g}^{(\theta)}_0\}$, automatic periods $\{p_\theta\}$, number of codebook entries $k$}
\STATE{Initialize histogram counts $\bm{c}\in\R^{16k}$ over $[-1,1]$}
\FOR{each parameter tensor $\theta$}
    \STATE{Reshape $\bm{g}^{(\theta)}_0$ into blocks $\bm{B}\in\R^{(n_\theta/p_\theta)\times p_\theta}$}
    \STATE{Normalize each nonzero block by its maximum absolute value: $\bm{Z}_{i,:}\leftarrow \bm{B}_{i,:}/\|\bm{B}_{i,:}\|_\infty$}
    \STATE{Accumulate all entries of $\bm{Z}$ into histogram counts $\bm{c}$}
\ENDFOR
\STATE{Let $\{(m_i,c_i)\}_{i=1}^{b}$ be the nonempty histogram bin centers and counts, sorted by $m_i$}
\STATE{Compute prefix sums of $c_i$, $c_i m_i$, and $c_i m_i^2$}
    \STATE{Define the interval cost function $C(b_\ell,b_r,k')$ for assigning bins $[b_\ell,b_r]$ to codebook index $k'$: use center $-1.0$ if $k'=1$, center $1.0$ if $k'=k$, and otherwise use the weighted mean of bins $[b_\ell,b_r]$} 
\COMMENT{\cref{app:dp_details}}
\STATE{$D_{b_r,k'}:=$ the cost for quantizing the first $b_r$ bins using the first $k'$ codepoints; initialize $D_{0,0}\leftarrow 0$ and all other entries to $\infty$}
\FOR{$k'=1,\ldots,k$}
    \FOR{$b_r=k',\ldots,b$} 
        \STATE{$D_{b_r,k'}\leftarrow \min_{b_\ell\le b_r}\; D_{b_\ell-1,k'-1}+C(b_\ell,b_r,k')$}
        \STATE{Store the minimizing split point}
    \ENDFOR
\ENDFOR
\STATE{Backtrack the stored splits to recover the $k$ codebook entries}
\STATE{\textbf{return} the sorted codebook}
\end{algorithmic}

\end{algorithm}
\subsection{Automatic Partitioning}
As shown in \cref{fig:hessian_and_sq_grad} and by \citet{collobert2004large,zhang2025adam}, the Hessians of some parameters exhibit block-diagonal structure during pretraining. 
Splitting this parameter into 4 blocks, corresponding to the number of attention heads in this example, groups together parameters with stronger Hessian coupling, which, as suggested by the theorem and empirical evidence, leads to more similar squared gradients. Splitting into 16 groups provides finer granularity, as shown in the figure. However, computing the two-dimensional Hessian of mixed gradients is not tractable in large models. 
We therefore use the observed structure of the squared gradients: the same 16-block partition produces regions with more similar squared-gradient magnitudes. This observation motivates our one-dimensional squared-gradient-based algorithm for partitioning parameters into blocks, without computing the Hessian directly. \Cref{sec:pretraining_vs_finetuning} discusses how this squared-gradient periodicity differs between pretraining and finetuning.
For each parameter tensor, \ourmethod infers the sharing pattern from the first-iteration gradient
only.
Let $\bm{g}_0 \in \R^n$ be the \textit{flattened} gradient of a parameter tensor at
the first optimization step. 
For every divisor $p$ of $n$ we reshape
$\log(\bm{g}_0^2+\epsilon)$ into consecutive blocks of size $p$ and measure the average contrast between adjacent block means:
\begin{wrapfigure}{r}{0.45\linewidth}
    \vspace{-0.8em}
    \centering
    \includegraphics[width=0.9\linewidth]{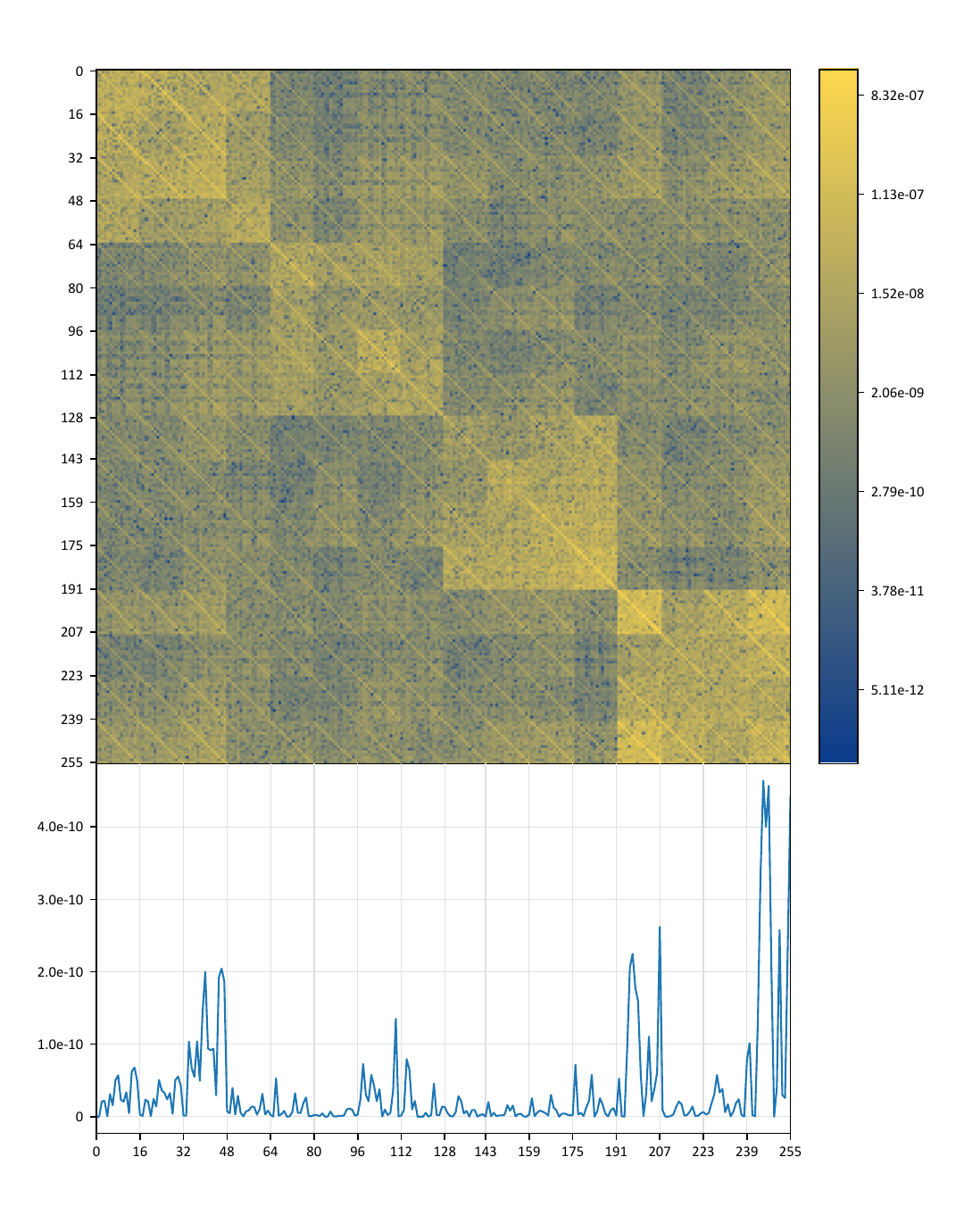}
    \caption{Hessian of a toy transformer attention-query parameter, showing block-diagonal structure, and the corresponding squared gradients of the same parameter below. Qualitatively, both exhibit a visible 16-block structure.}
    \label{fig:hessian_and_sq_grad}
    \vspace{-5.5em}
\end{wrapfigure}

\begin{equation}
\begin{aligned}
    \mu_i(p)=\frac{1}{p}\sum_{j=(i-1)p+1}^{ip}\log(g_{0,j}^2+\epsilon) \\
    S(p) =
    \frac{1}{n/p-1}\sum_{i=1}^{n/p-1}
    \left|\mu_{i+1}(p)-\mu_i(p)\right|
\end{aligned}
    \label{eq:automatic_partition_score}
\end{equation}
For large $n$, the number-of-divisors function $\tau(n)$ satisfies the classical bound $\tau(n) \le n^{(\ln 2 + o(1))/\ln\ln n}$, showing that the number of divisors grows subpolynomially in $n$ \citep{landau1909handbuch}, so the number of candidate periods is low.
For example, the maximum value of $\tau(n)$ over the intervals
$[0.9\cdot 10^6,1.1\cdot 10^6]$ and
$[0.9\cdot 10^9,1.1\cdot 10^9]$ is only $256$ and $1344$,
respectively.
Large values of $S(p)$ indicate that the block boundaries induced by period $p$
separate regions with different squared-gradient magnitudes. 
Ideally, we want large $p$ for reduced memory while still aligning block
boundaries with structure in the squared gradients.

Some divisors align better with the structure of the Hessian and therefore
produce pronounced local peaks in $S(p)$. 
We sort the candidate periods in increasing order, keep only strict local maxima, and choose the remaining period with the largest score.
If no such candidate exists, we set $p=1$.
The selected period is fixed for the remainder
of training. \cref{alg:automatic_partitioning} provides the pseudocode of the algorithm. The period we found has dual use: it is used to share the second moment, and also to quantize the first moment, as described in the next section.

\subsubsection{Exact Quantization of Momentum Using Histogram-Based Dynamic Programming}
In our quest for a better optimizer, we seek to further reduce the memory footprint of the first moment through quantization. Unlike Adam8bit and Adam4bit, which rely on fixed, hand-designed quantization schemes, with a block-size hyperparameter for storing the scale factor, and a quantization technique that is not guaranteed to be optimal, we design a \textit{learnable} and \textit{exact} dynamic programming quantization that automatically \textit{reuses} the block partitioning described earlier, thereby not introducing an additional constant for the quantization block size.
Lloyd-Max \citep{lloyd1982least,max1960quantizing} is a classical algorithm for scalar quantization. It iteratively alternates between assigning each value to the nearest codebook entry and updating each codebook entry to be the mean of the values assigned to it. However, we identify two major problems with this approach. The first problem is \textit{contraction}. Because Lloyd-Max learns its codebook from the observed values, the two extreme codebook entries are not necessarily fixed at the endpoints of the normalized range $[-1,1]$. If we recompute the quantization scale at every optimizer step, this can create an undesirable feedback loop: after dequantization, the largest stored value is bounded by the largest learned codebook entry, which may be strictly smaller than $1$. The next absmax estimate is then smaller as well, causing the effective quantization range to shrink over time. To solve this problem, we force the codebook to always include the two extreme entries $-1$ and $1$, which guarantees that the quantization range is preserved and prevents contraction. Another issue is that we found Lloyd-Max to be sensitive to the codebook initialization strategy: linear spacing or initialization by quantiles can lead to different results, with no consistent winner. See \cref{app:exact-dp-quantization} for more details. 

One-dimensional Lloyd-Max quantization is not guaranteed to converge to the optimal codebook solution and is sensitive to the codebook initialization strategy (\cref{app:exact-dp-quantization}). We therefore seek an optimal solution. The quantization problem is NP-hard in general \citep{aloise2009np,dasgupta2009random,mahajan2012planar}; however, inspired by \citet{wang2011ckmeans}, we design an optimal dynamic programming solution that operates on the histogram of the first-step gradients and runs in $O(b^2k)$, where $b$ is the number of bins in the histogram and $k$ is the number of quantization levels. 
Histogram calculation is performed in CUDA kernels, and dynamic programming is performed on the CPU using Numba \citep{lam2015numba}, allowing the codebook learning process, which takes place only once at the beginning of training, to finish quickly.
See \cref{alg:exact_histogram_codebook,app:dp_details} for more details on the algorithm. 
See \cref{app:exact-dp-quantization} for a comparison of the exact method with the Lloyd-Max heuristic, and \cref{app:ablations} for ablations.

\section{Experiments}
\label{sec:experiments}
\begin{wraptable}{r}{0.40\linewidth}
\vspace{-2.2em}
\centering
\caption{Memory footprint normalized by parameter memory. All methods use
momentum. The two best values are bolded. As the model size grows, \ourmethod has a \textbf{lower} memory footprint
than all Adam variants. 
}
\label{tab:memory_footprint}
\resizebox{\linewidth}{!}{%
\begin{tabular}{lcccc}
\\
\hline
Optimizer & \multicolumn{2}{c}{CNN (1.2M)} & \multicolumn{2}{c}{GPT-2 (125M)} \\

  & Persistent mem & Peak mem & Persistent mem & \textbf{Peak mem} \\
\hline
AdamW & x2.00 & x2.00 & x2.01 & x2.01 \\
Adam8bit & x0.51 & x0.51 & x0.51 & x0.51 \\
Adam4bit & \textbf{x0.27} & \textbf{x0.27} & \textbf{x0.30} & \textbf{x0.30} \\

Adam-mini  & x1.00 & x2.97  & x1.01 & x1.02 \\
Adafactor  & x1.02 & x3.05  & x1.01 & x3.01 \\
Muon & x1.00 & x3.12  & x1.01 & x1.63 \\
SM3  & x1.01 & x2.02  & x1.01 & x1.84 \\

\ourmethod  & \textbf{x0.27} & \textbf{x0.29} & \textbf{x0.25} & \textbf{x0.25} \\

\hline
\end{tabular}
}
\vspace{-1.2em}
\end{wraptable}

We evaluate \ourmethod across the desiderata presented above: peak memory footprint, performance, throughput, and distributability, using a variety of model architectures, model sizes, and datasets. 
Both in theory and in implementation, \ourmethod reduces the peak and persistent memory footprint of the optimizer state by \(8\times\) in full precision training, while achieving performance similar to AdamW. In the evaluated distributed-training configurations, \ourmethod improves FSDP throughput by 56\% compared to AdamW and enables the DDP setting where AdamW cannot fit even a microbatch size of 1. Alternatively, one can use the reduced memory footprint to increase model size. Implementation details and hyperparameters are found in \cref{app:hyperparameters}.
\subsection{Peak Memory Footprint}
\begin{wraptable}[8]{r}{0.30\linewidth}
\vspace{-2.2em}
\centering
\caption{CNN on MNIST.
}
\label{tab:mnist_validation_loss}
\tiny
\begin{tabular}{lc}
\\
\hline
Optimizer & Validation Loss $\downarrow$ \\
\hline
AdamW & \boldnumsub{0.02817 +- 0.0023} \\ %
Adam8bit & \boldnumsub{0.0290+-0.0020}  \\ %
Adam4bit & \numsub{0.0313+-0.0013} \\ %
Adam-mini & \numsub{0.0321 +- 0.0004} \\
Adafactor & \numsub{0.0398+-0.0051} \\
Muon & \numsub{0.0351+-0.0003} \\ 
SM3 & \numsub{0.0455+-0.0012} \\ %
\hline
\ourmethod & \boldnumsub{0.0286+-0.0024} \\ %
\hline
\end{tabular}
\end{wraptable}

We begin with the first criterion in the desiderata above: the optimizer's peak memory footprint. We compare \ourmethod against the methods described in \cref{sec:related_work}, along with a few representative methods from \cref{app:extended_related}, and report the results in \cref{tab:memory_footprint}. 
Consistent with the algorithm description and our empirical measurements, \ourmethod reduces AdamW's peak and persistent 32-bit memory footprints by \(\mathord{\sim}8\times\). 
Although Adam4bit has a compelling memory
footprint, as shown next, it achieves suboptimal quality compared with the other
methods.
In the next section, for the \llamaoneb model, the total optimizer state is 11.2 GiB with AdamW and 1.5 GiB with \ourmethod, a reduction of 9.7 GiB that can be crucial in memory-constrained settings. 
For example, training Llama 8B can be done on four A6000 GPUs with \ourmethod, whereas with AdamW we had to use four H100 GPUs due to insufficient memory.
Qwen3-4B pre-training used 16-bit mixed precision, where AdamW's optimizer states occupied 14.99 GiB of VRAM per GPU, compared with 3.75 GiB for \ourmethod, a $4\times$ reduction, saving 11.24 GiB per GPU.
Next, we focus mainly on optimizers whose \textit{peak} memory footprint is \(\times 1\) or lower, i.e., at least a \(2\times\) reduction relative to AdamW.

\subsection{Performance}
\begin{wrapfigure}{r}{0.65\linewidth}
    \vspace{-1.0em}
    \centering
    \begin{minipage}{0.48\linewidth}
        \centering
        \includegraphics[width=\linewidth]{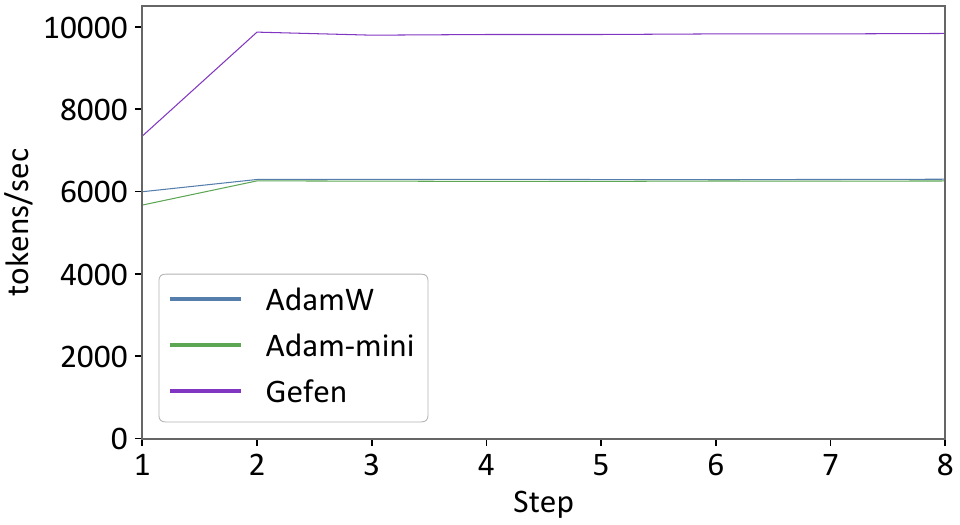}        
        {\scriptsize (a) \llamaoneb with FSDP}
    \end{minipage}
    \hfill
    \begin{minipage}{0.48\linewidth}
        \centering
        \includegraphics[width=\linewidth]{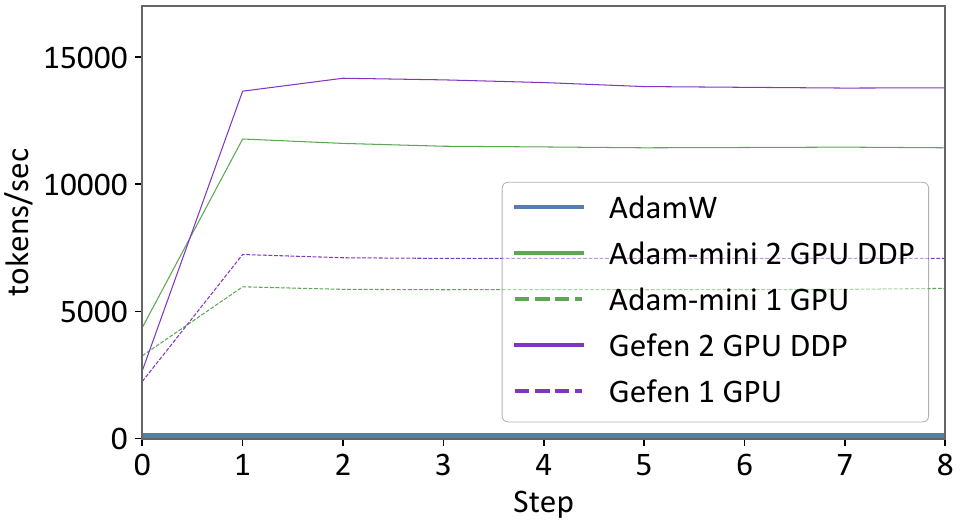}        
        {\scriptsize (b) \gptoneb with DDP}
    \end{minipage}
    \caption{Training throughput in two distributed settings.
    \ourmethod improves FSDP throughput by 56\%, and in DDP and single-GPU settings enables training where AdamW is infeasible and improves throughput by 21\% over Adam-mini.
    }
    \label{fig:throughput}
\end{wrapfigure}

\Cref{fig:gpt2_and_llama_train_curve,tab:mnist_validation_loss} show performance across several settings: \llamaoneb and Llama 8B on C4, Qwen3-4B on FineWeb-Edu, \gptsmall on OpenWebText, ResNet18, ResNet101, and DiT-XL/2 on ImageNet, CNN on MNIST and DDPM on Bollywood-Celebs.
\ourmethod performs on par with AdamW across the settings. Notably, in all experiments, we use the same learning rate for \ourmethod as AdamW to demonstrate that our method can serve as a drop-in replacement. 
\begin{figure}[tbp]
    \centering
    \begin{minipage}{0.31\linewidth}
        \centering
        \resizebox{\linewidth}{!}{%
        \begin{tikzpicture}[
            spy using outlines={circle,magnification=2.5,size=1.8cm,connect spies}
        ]
            \node[anchor=south west, inner sep=0] at (0,0) {
                \includegraphics[width=7cm]{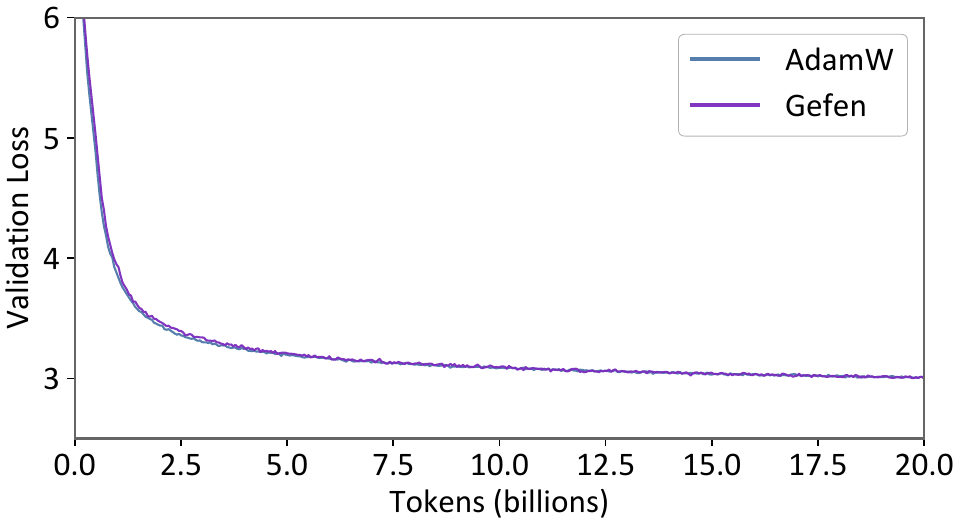}
            };
            \spy[red] on (3.5,1.2) in node at (2.0,2.6);
        \end{tikzpicture}
        }
        
        {\scriptsize (a) Loss: \gptsmall on OpenWebText}
    \end{minipage}
    \hfill
    \begin{minipage}{0.31\linewidth}
        \centering
        \resizebox{\linewidth}{!}{%
        \begin{tikzpicture}[
            spy using outlines={circle,magnification=2.5,size=1.6cm,connect spies}
        ]
            \node[anchor=south west, inner sep=0] at (0,0) {
                \includegraphics[width=7cm]{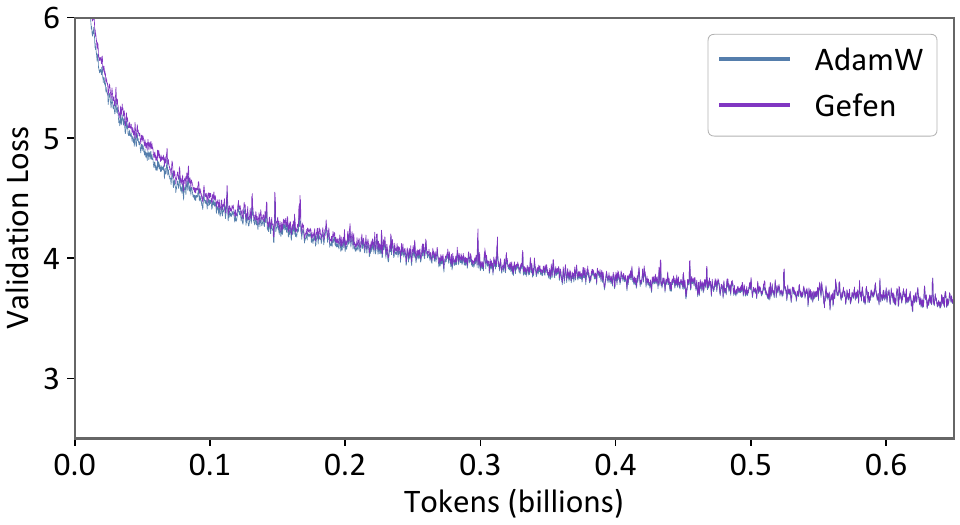}
            };
            \spy[red] on (6.2,1.7) in node at (4.0,2.8);
        \end{tikzpicture}
        }
        
        {\scriptsize (b) Loss: \llamaoneb on C4}
    \end{minipage}
    \hfill
    \begin{minipage}{0.31\linewidth}
        \centering
        \resizebox{\linewidth}{!}{%
        \begin{tikzpicture}[
            spy using outlines={circle,magnification=2.5,size=1.8cm,connect spies}
        ]
            \node[anchor=south west, inner sep=0] at (0,0) {
                \includegraphics[width=7cm]{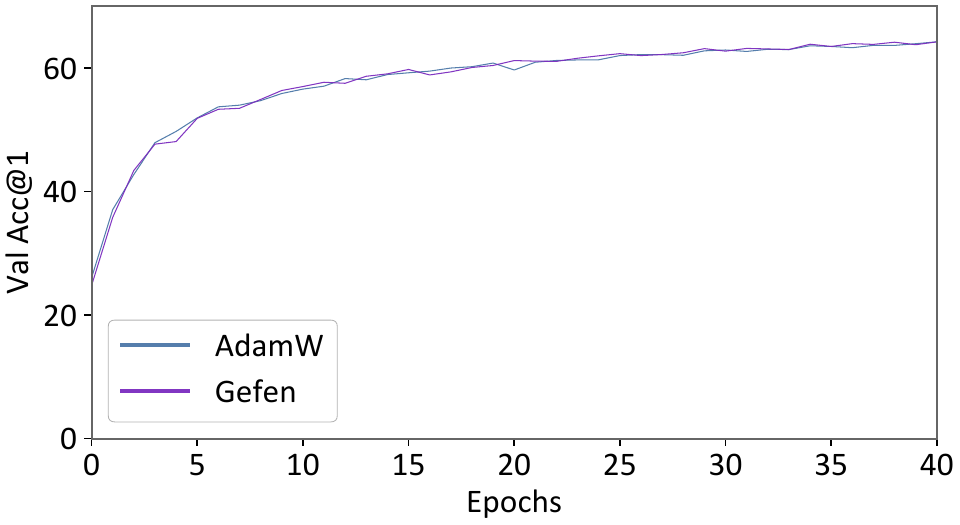}
            };
            \spy[red] on (5.9,3.4) in node at (4.0,1.9);
            
        \end{tikzpicture}
        }
        
        {\scriptsize (c) Acc: ResNet18 on ImageNet}
    \end{minipage}    
    \begin{minipage}{0.31\linewidth}
        \centering
        \resizebox{\linewidth}{!}{%
            \begin{tikzpicture}[
            spy using outlines={circle,magnification=4.5,size=1.2cm,connect spies}
        ]
            \node[anchor=south west, inner sep=0] at (0,0) {
                \includegraphics[width=7cm]{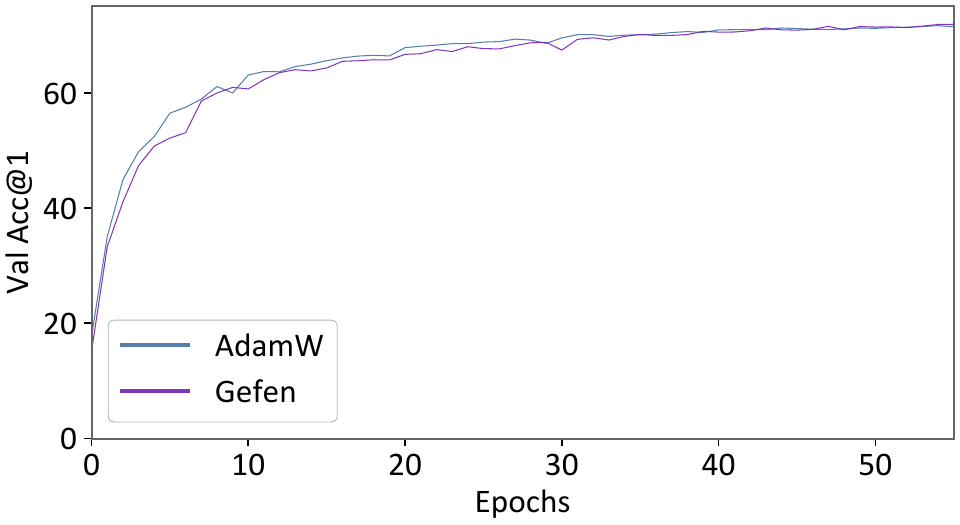}
            };
            \spy[red] on (6.5,3.59) in node at (4.0,1.6);
            
        \end{tikzpicture}
        }
        
        {\scriptsize (d) Acc: ResNet101 on ImageNet}
    \end{minipage}
    \hfill
    \begin{minipage}{0.31\linewidth}
        \centering
        \resizebox{\linewidth}{!}{%
            \begin{tikzpicture}[
            spy using outlines={circle,magnification=2.5,size=1.5cm,connect spies}
        ]
            \node[anchor=south west, inner sep=0] at (0,0) {
                \includegraphics[width=7cm]{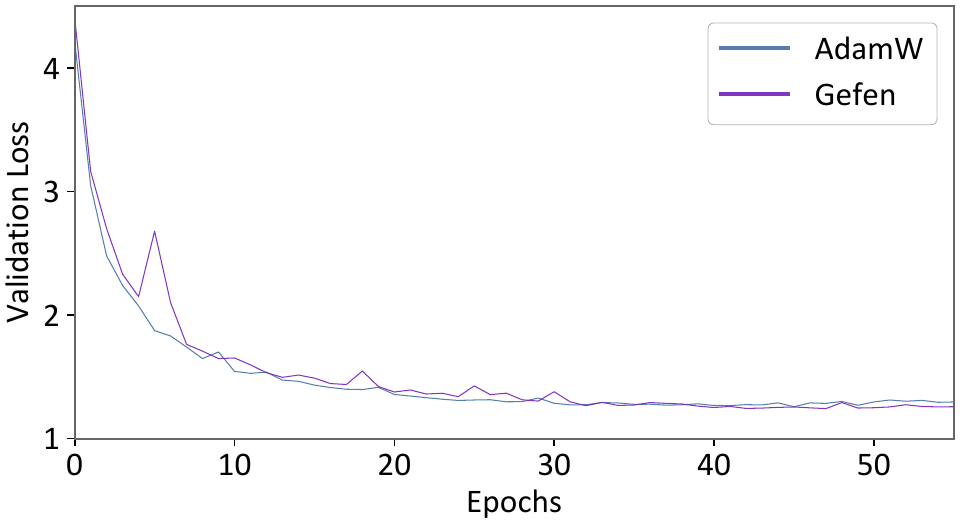}
            };
            \spy[red] on (6.30,1.00) in node at (3.5,2.6);
            
        \end{tikzpicture}
        }        
        {\scriptsize (e) Loss: ResNet101 on ImageNet}
    \end{minipage}
    \hfill
    \begin{minipage}{0.31\linewidth}
        \centering
        \resizebox{\linewidth}{!}{%
            \begin{tikzpicture}[
            spy using outlines={circle,magnification=4.0,size=1.6cm,connect spies}
        ]
            \node[anchor=south west, inner sep=0] at (0,0) {
                \includegraphics[width=7cm]{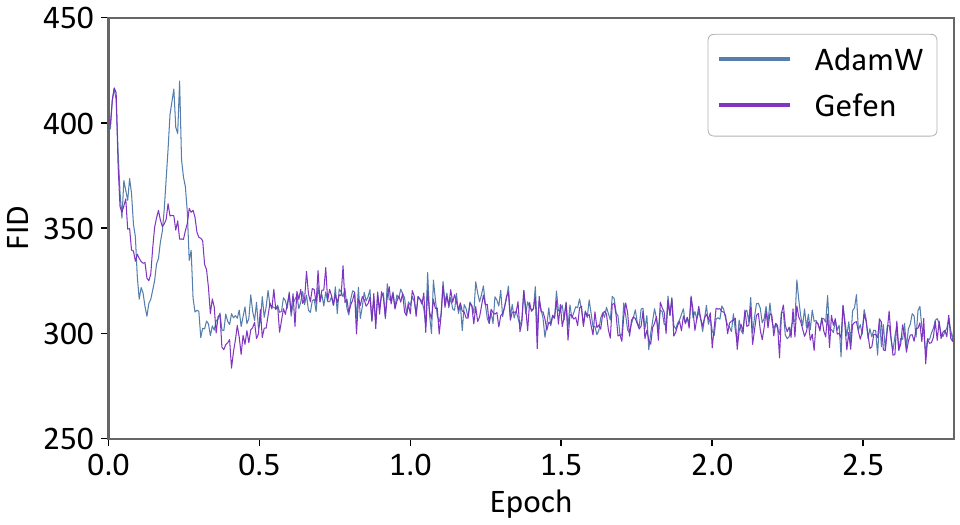}
            };
            \spy[red] on (6.65,1.35) in node at (3.39,2.7);
            
        \end{tikzpicture}
        }
        
        {\scriptsize (f) FID: DiT-XL/2 on ImageNet}
    \end{minipage}    
    \begin{minipage}{0.31\linewidth}
        \centering
        \resizebox{\linewidth}{!}{%
            \begin{tikzpicture}[
            spy using outlines={circle,magnification=2.5,size=1.5cm,connect spies}
        ]
            \node[anchor=south west, inner sep=0] at (0,0) {
                \includegraphics[width=7cm]{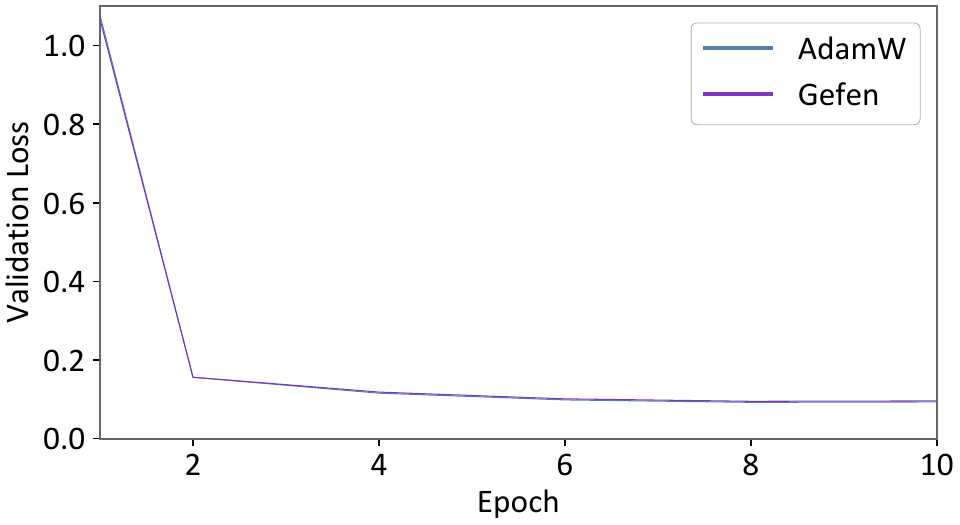}
            };
            \spy[red] on (6.30,1.00) in node at (3.5,2.6);
            
        \end{tikzpicture}
        }        
        {\scriptsize (g) Loss: DDPM (Diffusion)}
    \end{minipage}
    \hfill
    \begin{minipage}{0.31\linewidth}
        \centering
        \resizebox{\linewidth}{!}{%
            \begin{tikzpicture}[
            spy using outlines={circle,magnification=2.5,size=1.5cm,connect spies}
        ]
            \node[anchor=south west, inner sep=0] at (0,0) {
                \includegraphics[width=7cm]{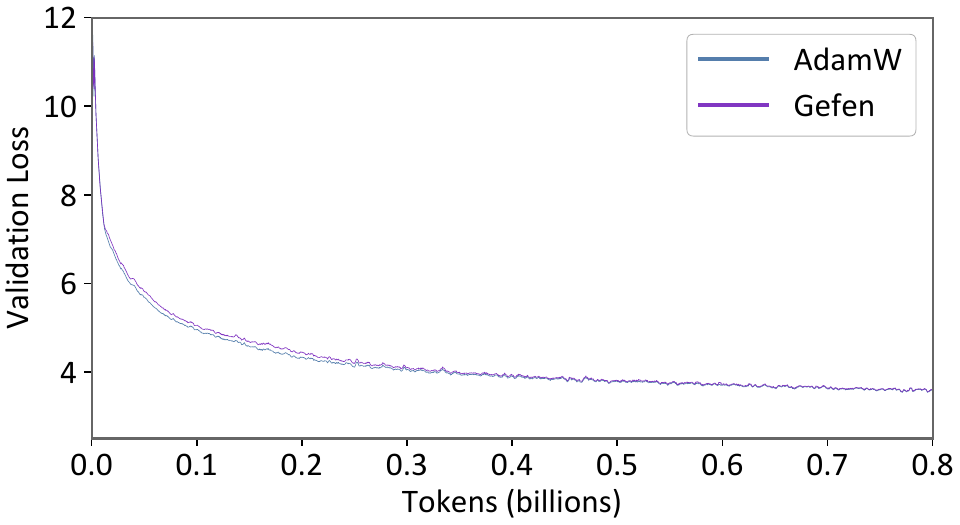}
            };
            \spy[red] on (6.30,1.10) in node at (3.5,2.6);
            
        \end{tikzpicture}
        }
        
        {\scriptsize (h) Loss: Llama 8B on C4}
    \end{minipage}
    \hfill
    \begin{minipage}{0.31\linewidth}
         \centering
        \resizebox{\linewidth}{!}{%
            \begin{tikzpicture}[
            spy using outlines={circle,magnification=2.5,size=1.5cm,connect spies}
        ]
            \node[anchor=south west, inner sep=0] at (0,0) {
                \includegraphics[width=7cm]{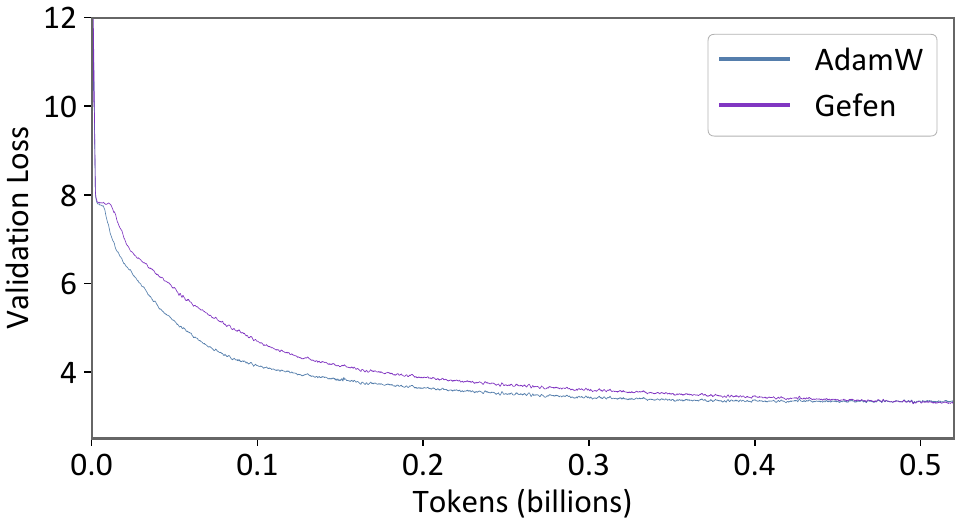}
            };
            \spy[red] on (6.50,1.00) in node at (3.5,2.6);
            
        \end{tikzpicture}
        }
        
        {\scriptsize (i) Loss: Qwen3-4B on FineWeb-Edu}
    \end{minipage}
    \caption{Training curves of \gptsmall on OpenWebText, \llamaoneb and Llama 8B on C4, Qwen3-4B on FineWeb-Edu, ResNet18 and ResNet101 on ImageNet, DiT-XL/2 on ImageNet, and DDPM on Bollywood-Celebs. \ourmethod performs on par with AdamW across the settings. Additional graphs can be seen in \cref{app:more_experiments}.
    } 
    \label{fig:gpt2_and_llama_train_curve}
    \label{fig:resnet_imagenet_train_curve}  
    \vspace{-0.8em}
\end{figure}

\WFclear

\subsection{Throughput and Distributability}
\begin{wrapfigure}{r}{0.40\linewidth}
    \vspace{-1.1em}
    \centering
    \includegraphics[width=0.60\linewidth]{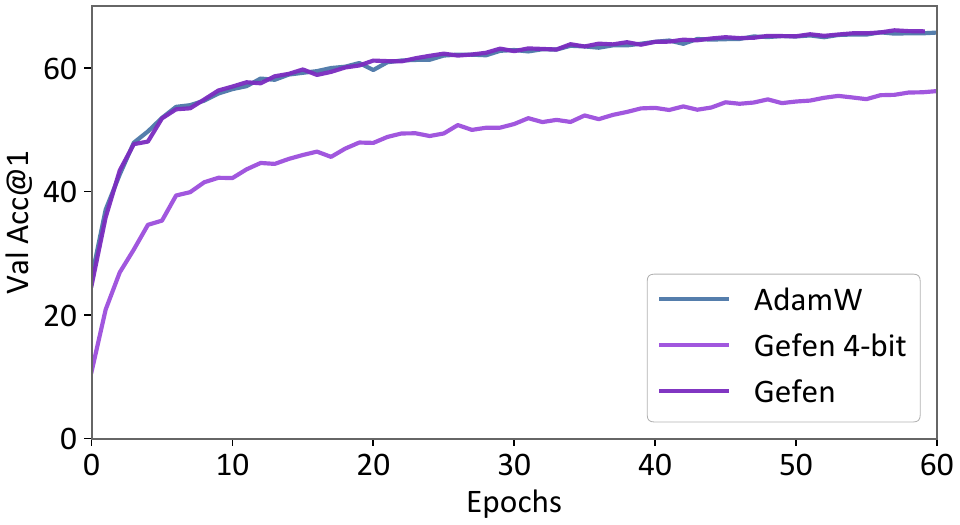}
     \vspace{-1.0em}
    \caption{Four-bit quantization.}
    \label{fig:resnet18_4_bit}
    \vspace{-1.2em}
\end{wrapfigure}

We next evaluate whether the reduced memory footprint translates into practical
throughput gains and whether the method is compatible with distributed training schemes.
Since optimizer-step time is only one component of the full
training iteration, we measure end-to-end training throughput in tokens per
second under fixed global batch size and sequence length.
In both distributed settings, the lower optimizer-state memory of \ourmethod allows a larger microbatch to fit on each GPU, reducing gradient accumulation overhead and
increasing throughput. 
\WFclear
\begin{wrapfigure}{r}{0.55\linewidth}
    \vspace{1.2em}
    \centering
    \begin{minipage}{0.48\linewidth}
        \centering
        \includegraphics[width=\linewidth]{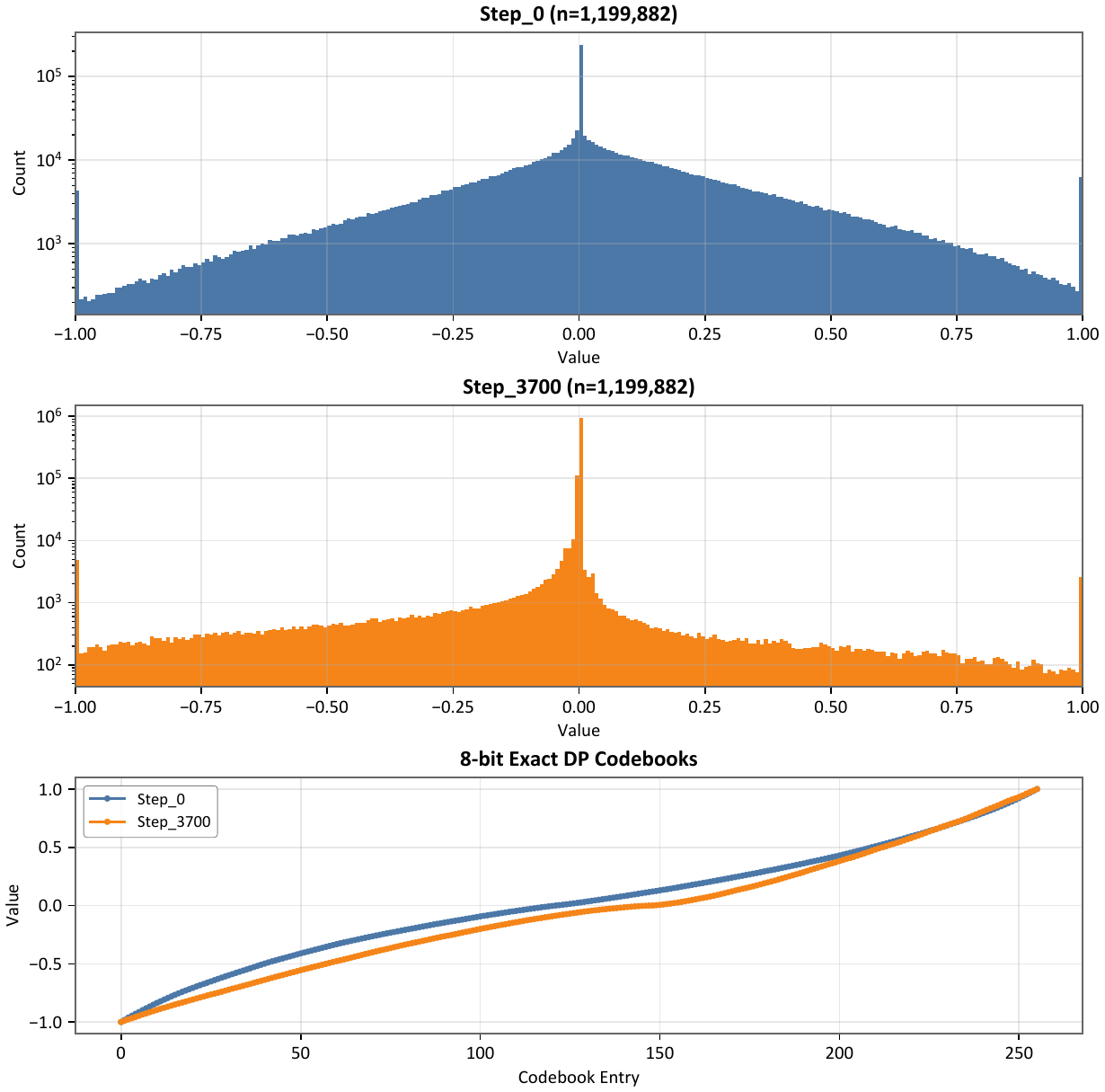}
        {\scriptsize (a) CNN on MNIST.}
    \end{minipage}
    \hfill
    \begin{minipage}{0.48\linewidth}
        \centering
        \includegraphics[width=\linewidth]{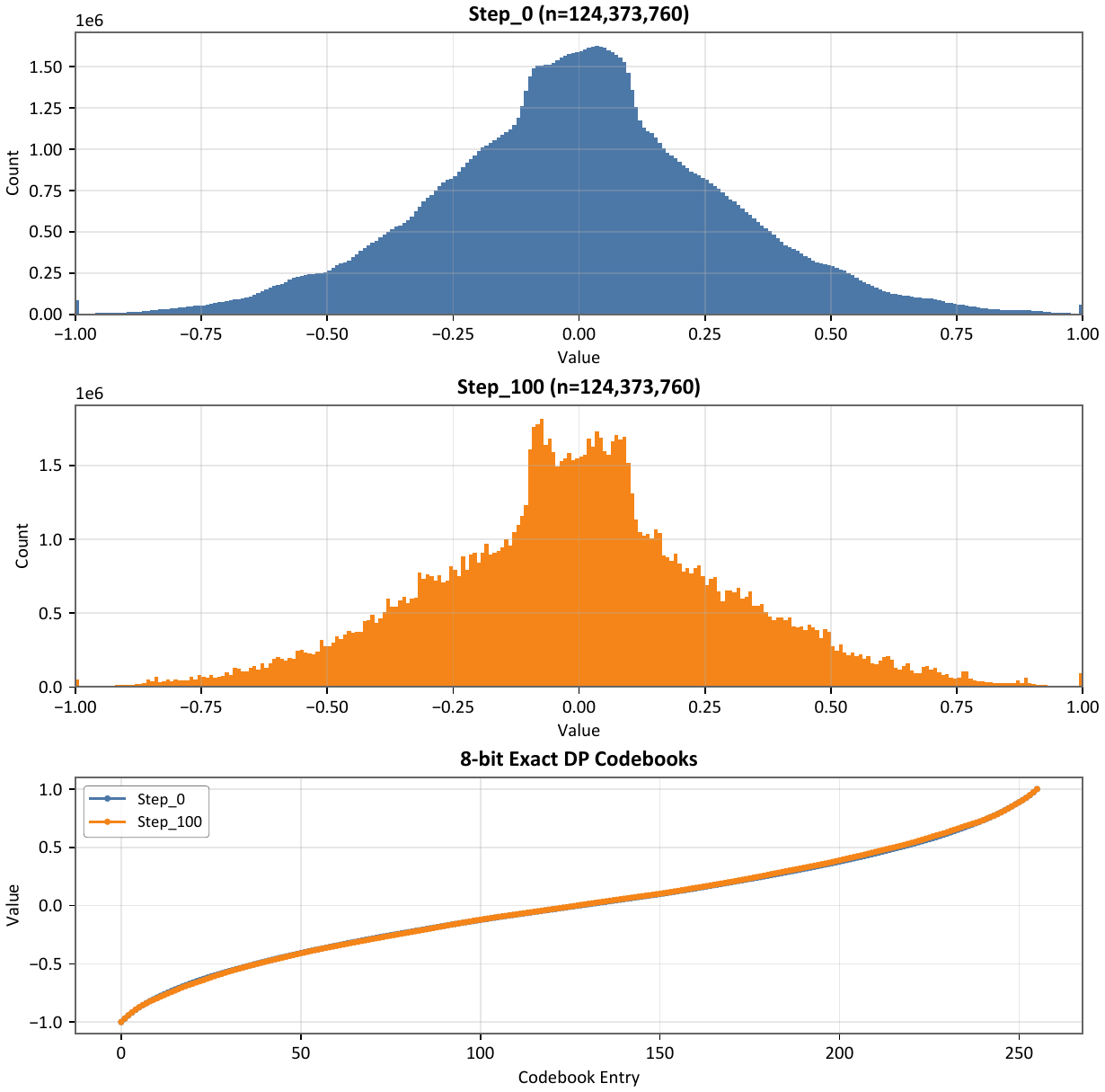}
        {\scriptsize (b) GPT-2 (125M) on OpenWebText.}
    \end{minipage}
    \caption{Top: Histograms of gradient values after block-wise absmax normalization at different stages of training. Bottom: Corresponding learned codebooks at the same stages. The codebooks are relatively stable throughout training, allowing us to learn the codebook once. 
    }
    \label{fig:stable_quantization}
\end{wrapfigure}

As shown in \cref{fig:throughput}, this yields a 56\%
throughput improvement over other methods for \llamaoneb with FSDP sharding on C4, by allowing the microbatch size per GPU to increase from 1 to 2, which is not possible with the other methods due to their larger memory footprint. Adam4bit and Adam8bit implementations do not support FSDP sharding \citep{liang2025torchtitan}. 
For the \gptoneb model, under both DDP on two 24GiB GPUs and single-machine training, AdamW is infeasible even with a microbatch size of 1. Adam-mini supports a microbatch size of 1, whereas \ourmethod supports a microbatch size of 2, yielding a 21\% throughput improvement over Adam-mini while making this setting feasible compared with AdamW. We compare optimizer distributability in more detail in \cref{app:distributability}.
\\

\section{Ablation Study}
\label{app:ablations}
\paragraph{Getting Down to Four Bits.}
While taking \ourmethod down to four bits works well in many situations, our goal is a general-purpose optimizer, so we cannot accept performance degradation on some tasks. \Cref{fig:resnet18_4_bit} shows a ResNet18 pretraining example in which reducing quantization to four bits harms performance. The same phenomenon can be seen for other four-bit methods in \cref{fig:ddpm_more_experiments}.

\vspace{-0.5em}
\paragraph{Stable Quantization.}
A natural question for the ablation study is whether it is sufficient to learn
the quantization codebook only at the beginning of training. \Cref{fig:stable_quantization}
shows the gradient distributions at different stages of training and their
corresponding codebooks.
The codebooks are relatively stable, suggesting that they can be reused throughout training.
See \cref{app:extended_ablation_study} for more ablations.

\section{Conclusion}
In this work, we presented \ourmethod, an optimizer that significantly reduces AdamW's second-moment memory by sharing second-moment estimates in parameter regions where the Hessian, or the squared-gradient proxy used by our method, exhibits periodic structure; empirically, this structure appears across the pretraining tasks we considered. We first formulated desiderata for practical memory-efficient optimizers, and then provided theoretical results showing that strong Hessian coupling makes squared gradients, and hence AdamW second moments, close to each other, motivating shared second-moment estimates. Since computing Hessians is impractical at scale, \ourmethod uses first-step gradients to expose useful information about network topology without requiring user-provided architecture metadata or additional hyperparameters. It detects periodicity, groups parameters accordingly, shares second moments within the resulting blocks, and reuses the same structure to quantize first moments using an exact dynamic-programming algorithm for codebook construction.

Among the baselines that reduce AdamW's peak memory footprint by \(2\times\) or more, \ourmethod matches AdamW's performance, requires no learning-rate adjustment or additional hyperparameters, and achieves low memory footprint (up to \(8\times\) less than AdamW) and high throughput. In practice, switching from AdamW to \ourmethod requires changing only a single line of code, since the optimizer's arguments are fully compatible. \ourmethod can also be combined with other optimization methods, as discussed in \cref{app:extensions}.

\bibliography{iclr2026_conference}
\bibliographystyle{iclr2026_conference}

\clearpage
\setcounter{tocdepth}{3}
\tableofcontents

\clearpage
\appendix

\section{Proof of the Main Theoretical Result}
\label[appendix]{app:theory-proof}
\begin{theorem}[Large Hessian entries contract squared-gradient ratios]

\end{theorem}

\begin{proof}
Consider a simple two-layer MLP with input vector $x$, and output vector $y$:
\begin{align}
z &= W_1 x + b_1 \qquad z\in \mathbb{R}^h \\
y &= W_2 \sigma(z) + b_2 \qquad y\in \mathbb{R}^{d_{\mathrm{out}}}
\end{align}
with output dimension $d_{\mathrm{out}}$, and MSE loss
\begin{equation}
L = \frac{1}{d_{\mathrm{out}}}\sum_{j=1}^{d_{\mathrm{out}}} (y_j - t_j)^2
= \frac{1}{d_{\mathrm{out}}}\sum_{j=1}^{d_{\mathrm{out}}} e_j^2
\end{equation}
Here $t\in\mathbb{R}^{d_{\mathrm{out}}}$ is the target vector, and the error vector is $e_j := y_j - t_j$.

\begin{equation}
y_i=\sum_{k} W_2[i,k] \sigma\left(\underbrace{\sum_{l} W_1[k,l] x_l + b_1[k]}_{=\,z_k}\right) + b_2[i] 
\label{eq:w1-gradient-components}
\end{equation}

\begin{equation}
\frac{\partial L}{\partial y_i} = \frac{2}{d_{\mathrm{out}}}e_i
\qquad
\frac{\partial y_i}{\partial W_1[k,l]}
= W_2[i,k]\sigma'(z_k)x_l
\end{equation}

Define:
\begin{equation}
    A_k := \sum_{i=1}^{d_{\mathrm{out}}} e_i W_2[i,k]
\end{equation}

Hence the gradient entry w.r.t. $W_1[k,l]$ is

\begin{equation}
g_{k,l}
:= \frac{\partial L}{\partial W_1[k,l]}
= \frac{2}{d_{\mathrm{out}}}x_l \sigma'(z_k)\sum_{i=1}^{d_{\mathrm{out}}} e_i W_2[i,k] = \frac{2}{d_{\mathrm{out}}}x_l \sigma'(z_k)A_k
\label{eq:w1-gradient}
\end{equation}
Both $\sigma'(z_k)$ and $e_i$ depend on $W_1[k,l]$.

The Hessian w.r.t. $W_1[k,l]$ and $W_1[k',l']$ is:

\begin{align}
H_{(k,l),(k',l')}:= \frac{\partial^2 L}{\partial W_1[k,l]\,\partial W_1[k',l']}
&= \frac{\partial g_{k,l}}{\partial W_1[k',l']} \nonumber\\
&= \frac{2}{d_{\mathrm{out}}}x_l
\left(
\frac{\partial \sigma'(z_k)}{\partial W_1[k',l']}A_k
\;+\;
\sigma'(z_k)\frac{\partial A_k}{\partial W_1[k',l']}
\right) \label{eq:hess-product}
\end{align}
Now compute each derivative:
\begin{align}
\frac{\partial \sigma'(z_k)}{\partial W_1[k',l']}
&= \sigma''(z_k)\frac{\partial z_k}{\partial W_1[k',l']}
= \sigma''(z_k)\delta_{k,k'}x_{l'} \\
\frac{\partial A_k}{\partial W_1[k',l']}
&= \sum_{i=1}^{d_{\mathrm{out}}} W_2[i,k]\frac{\partial e_i}{\partial W_1[k',l']} \\
\frac{\partial e_i}{\partial W_1[k',l']}
&= \frac{\partial y_i}{\partial W_1[k',l']}
= W_2[i,k']\sigma'(z_{k'})x_{l'}
\end{align}
So
\begin{equation}
\frac{\partial A_k}{\partial W_1[k',l']}
= x_{l'}\sigma'(z_{k'})\sum_{i=1}^{d_{\mathrm{out}}} W_2[i,k]W_2[i,k']
\end{equation}
Substitute into \cref{eq:hess-product} to get the Hessian:
\begin{align}
H_{(k,l),(k',l')}
&= \frac{2}{d_{\mathrm{out}}}x_l x_{l'} \Bigg[
\delta_{k,k'}\,\sigma''(z_k)\sum_{i=1}^{d_{\mathrm{out}}} e_i W_2[i,k] \nonumber\\
&\qquad\qquad\qquad
+ \sigma'(z_k)\sigma'(z_{k'})
\sum_{i=1}^{d_{\mathrm{out}}} W_2[i,k]W_2[i,k']
\Bigg]
\label{eq:w1-hessian-components}
\end{align}
Here $\delta_{k,k'}$ is the Kronecker delta.
This is the second derivative between two \emph{different} entries of $W_1$. For $k\neq k'$ (so the $\delta_{k,k'}$ term is zero), define the indexed quantities
\begin{equation}
U_{k,l}:=|x_l\sigma'(z_k)|,\quad
C_{k,k'}:=\left|\frac{A_k}{A_{k'}}\right|,\quad
D_{k,k'}:=\left|\sum_i W_2[i,k]W_2[i,k']\right|.
\end{equation}
Then from \cref{eq:w1-gradient,eq:w1-hessian-components} we have:
\begin{equation}
R:=\frac{g_{k,l}^2}{g_{k',l'}^2}
=\left(\frac{U_{k,l}}{U_{k',l'}}\right)^2 C_{k,k'}^2 \label{eq:rdef}
\end{equation}
\begin{equation}
|H_{(k,l),(k',l')}|
=\frac{2}{d_{\mathrm{out}}}\,U_{k,l}U_{k',l'}D_{k,k'} \label{eq:abs-h-uvd}
\end{equation}
Assume:
\begin{equation}
\begin{aligned}
&D_{k,k'}\le \alpha,\qquad
U_{k,l}\le \beta,\qquad
|\log C_{k,k'}|\le \gamma,\\
&\forall k,l,k',l'\ \text{s.t.}\
k\neq k',\ A_k,A_{k'}\neq 0,\ H_{(k,l),(k',l')}\neq 0
\end{aligned}
\label{eq:bounds}
\end{equation}
Now fix any such pair. From \cref{eq:abs-h-uvd} and
$D_{k,k'}\le \alpha$, we get
\begin{equation}
U_{k,l}U_{k',l'}\ge \frac{|H_{(k,l),(k',l')}|\,d_{\mathrm{out}}}{2\alpha}
\end{equation}
Since the uniform bound applies to both index pairs, we also have
$U_{k,l},U_{k',l'}\le \beta$, so:
\begin{equation}
 \begin{aligned}
&U_{k,l},U_{k',l'}\in
\left[\frac{|H_{(k,l),(k',l')}|\,d_{\mathrm{out}}}{2\alpha\beta},\,\beta\right]
\\
&\overset{\text{log arithmetic}}{\Longrightarrow}\quad
\left|\log\frac{U_{k,l}}{U_{k',l'}}\right|
\le
\log\!\left(\frac{2\alpha\beta^2}{|H_{(k,l),(k',l')}|\,d_{\mathrm{out}}}\right)
\;=:\;L_{k,l,k',l'} \label{eq:logbound}
 \end{aligned}
\end{equation}
From \cref{eq:rdef,eq:bounds,eq:logbound} we get the following bounds on $R$:
\begin{equation}
e^{-2(L_{k,l,k',l'}+\gamma)}
\le
R
\le
e^{2(L_{k,l,k',l'}+\gamma)}
\end{equation}
Equivalently, define
\begin{equation}
A:=\frac{2\alpha\beta^2}{d_{\mathrm{out}}}
\end{equation}
which upper bounds the Hessian magnitude under the stated assumptions:
\begin{equation}
|H_{(k,l),(k',l')}|=\frac{2}{d_{\mathrm{out}}}U_{k,l}U_{k',l'}D_{k,k'}
\le
\frac{2}{d_{\mathrm{out}}}\alpha\beta^2
=A.
\end{equation}
Since
$L_{k,l,k',l'}=\log\!\left(\frac{2\alpha\beta^2}{|H_{(k,l),(k',l')}|\,d_{\mathrm{out}}}\right)
=\log\!\left(\frac{A}{|H_{(k,l),(k',l')}|}\right)$:
\begin{equation}
\exp\!\left(
-2\left[\log\!\left(\frac{A}{|H_{(k,l),(k',l')}|}\right)+\gamma\right]
\right)
\le
R
\le
\exp\!\left(
2\left[\log\!\left(\frac{A}{|H_{(k,l),(k',l')}|}\right)+\gamma\right]
\right)
\end{equation}
That is,
\begin{equation}
e^{-2\gamma}\left(\frac{|H_{(k,l),(k',l')}|}{A}\right)^2
\le
R
\le
e^{2\gamma}\left(\frac{A}{|H_{(k,l),(k',l')}|}\right)^2
\end{equation}
Thus, as $|H_{(k,l),(k',l')}|$ increases within its feasible range
$0<|H_{(k,l),(k',l')}|\le A$, it
raises the lower bound and lowers the upper bound.
By Hessian symmetry, swapping $(k,l)$ and $(k',l')$ leaves
$|H_{(k,l),(k',l')}|$
unchanged and replaces $R$ by $1/R$; therefore the admissible interval is
reciprocal-symmetric around $1$.
Hence the admissible interval for $R$ shrinks to the band
$[e^{-2\gamma},e^{2\gamma}]$ (and to $1$ if $\gamma$ is small).
Empirically we see that the same phenomenon happens even when $k=k'$.

\paragraph{The Same Argument for $\mW_2$}
The second-layer weights give an even simpler case. For
$W_2[i,k]$, using \cref{eq:w1-gradient-components}, since $z$ does not
depend on $W_2$,
\begin{equation}
g_{i,k}^{(2)}
:= \frac{\partial L}{\partial W_2[i,k]}
= \frac{2}{d_{\mathrm{out}}}e_i\sigma(z_k)
\end{equation}
For two entries $W_2[i,k]$ and $W_2[i',k']$,
\begin{equation}
H^{(2)}_{(i,k),(i',k')}
:=
\frac{\partial^2 L}
{\partial W_2[i,k]\,\partial W_2[i',k']}
=
\frac{2}{d_{\mathrm{out}}}\delta_{i,i'}\sigma(z_k)\sigma(z_{k'})
\end{equation}
Thus if $i\neq i'$, the mixed Hessian is zero. If $i=i'$ and we define
\begin{equation}
U^{(2)}_k:=|\sigma(z_k)|
\end{equation}
then
\begin{equation}
R^{(2)}
:=
\frac{(g_{i,k}^{(2)})^2}{(g_{i,k'}^{(2)})^2}
=
\left(\frac{U^{(2)}_k}{U^{(2)}_{k'}}\right)^2
\qquad
|H^{(2)}_{(i,k),(i,k')}|
=
\frac{2}{d_{\mathrm{out}}}U^{(2)}_kU^{(2)}_{k'}
\end{equation}
Assuming $U^{(2)}_k\le \beta^{(2)}$ for all $k$, the same
log-arithmetic gives
\begin{equation}
\left|\log\frac{U^{(2)}_k}{U^{(2)}_{k'}}\right|
\le
\log\!\left(\frac{2(\beta^{(2)})^2}{|H^{(2)}_{(i,k),(i,k')}|\,d_{\mathrm{out}}}\right)
\end{equation}
Equivalently, with
$A^{(2)}:=2(\beta^{(2)})^2/d_{\mathrm{out}}$,
\begin{equation}
\left(\frac{|H^{(2)}_{(i,k),(i,k')}|}{A^{(2)}}\right)^2
\le
R^{(2)}
\le
\left(\frac{A^{(2)}}{|H^{(2)}_{(i,k),(i,k')}|}\right)^2
\end{equation}
Therefore, for second-layer weights that share the same output coordinate,
larger Hessian magnitude again forces the squared-gradient ratio toward
$1$.

\paragraph{Bias Terms}
The hidden bias $b_1[k]$ behaves like the first-layer weight
$W_1[k,l]$ with the input factor $x_l$ removed. From \cref{eq:w1-gradient-components}, its gradient is
\begin{equation}
g^{(b_1)}_k
:=
\frac{\partial L}{\partial b_1[k]}
=
\frac{2}{d_{\mathrm{out}}}\sigma'(z_k)A_k
\end{equation}
and the mixed Hessian between $b_1[k]$ and $b_1[k']$ is
\begin{equation}
H^{(b_1)}_{k,k'}
:=
\frac{\partial^2 L}{\partial b_1[k]\,\partial b_1[k']}
=
\frac{2}{d_{\mathrm{out}}}
\left[
\delta_{k,k'}\sigma''(z_k)A_k
+
\sigma'(z_k)\sigma'(z_{k'})
\sum_i W_2[i,k]W_2[i,k']
\right]
\end{equation}
Thus, for $k\neq k'$, the $\delta_{k,k'}$ term disappears and the same
argument applies with
\begin{equation}
U^{(b_1)}_k:=|\sigma'(z_k)|,\qquad
C_{k,k'}:=\left|\frac{A_k}{A_{k'}}\right|,\qquad
D_{k,k'}:=\left|\sum_i W_2[i,k]W_2[i,k']\right|
\end{equation}
In this case,
\begin{equation}
R^{(b_1)}
:=
\frac{(g^{(b_1)}_k)^2}{(g^{(b_1)}_{k'})^2}
=
\left(\frac{U^{(b_1)}_k}{U^{(b_1)}_{k'}}\right)^2C_{k,k'}^2,
\qquad
|H^{(b_1)}_{k,k'}|
=
\frac{2}{d_{\mathrm{out}}}U^{(b_1)}_kU^{(b_1)}_{k'}D_{k,k'}
\end{equation}
so the same conditional bound as for $W_1$ follows.

For the output bias $b_2$, the Hessian is either zero or constant, so it
is irrelevant to our claim:
\begin{equation}
g^{(b_2)}_i
:=
\frac{\partial L}{\partial b_2[i]}
=
\frac{2}{d_{\mathrm{out}}}e_i
\end{equation}
and
\begin{equation}
H^{(b_2)}_{i,i'}
:=
\frac{\partial^2 L}{\partial b_2[i]\,\partial b_2[i']}
=
\frac{2}{d_{\mathrm{out}}}\delta_{i,i'}
\end{equation}
\end{proof}

\label[appendix]{app:gauss-newton-proof}
\begin{theorem}[Within-layer Hessian affinity controls gradient magnitudes]

\end{theorem}

\begin{proof}[Proof of \cref{thm:gauss-newton}]
Let \(J_a:=\partial_a f_\theta(x)\in\mathbb{R}^m\), \(q:=\nabla\ell(z)\), and \(S:=\nabla^2\ell(z)\). For a composite loss \(L(\theta)=\ell(f_\theta(x))\), the Hessian entries satisfy
\[
H_{ab}=J_a^\top S J_b+\langle q,\partial_{ab}^2 f_\theta(x)\rangle .
\]

Since the network uses ReLU activations and \(\theta\) is away from activation boundaries, all ReLU masks are locally constant in a neighborhood of \(\theta\). Hence, locally, the network is obtained by composing affine maps and fixed diagonal activation masks. Holding all other layers fixed, \(f_\theta(x)\) is affine in the parameters of any single affine layer. Therefore, for two parameters \(a,b\) belonging to the same affine layer, \(\partial_{ab}^2 f_\theta(x)=0\). In particular, for \(a,b\in\{i,j\}\), we have \(H_{ab}=J_a^\top S J_b\). Since \(S\succeq 0\), this implies \(H_{aa} \geq 0\) and \(0\le \rho_{ij}\le 1\).
Note that if \(H_{aa}=0\), then \(S^{1/2}J_a=0\), and since
\(q\in\operatorname{Range}(S)\), it follows that \(g_a=\langle q , J_a\rangle=0\).
Thus, this is consistent with our convention that $g_a^2/H_{aa} = 0$ for zero-diagonal coordinates.

Define \(u_i:=S^{1/2}J_i/\sqrt{H_{ii}}\) and \(u_j:=S^{1/2}J_j/\sqrt{H_{jj}}\). Since \(H_{ii}=J_i^\top S J_i\) and \(H_{jj}=J_j^\top S J_j\), we have \(\|u_i\|=\|u_j\|=1\). Moreover,
\[
|\langle u_i,u_j\rangle|
=
\frac{|J_i^\top S J_j|}{\sqrt{H_{ii}H_{jj}}}
=
\frac{|H_{ij}|}{\sqrt{H_{ii}H_{jj}}}
=
\rho_{ij}.
\]
Let \(r:=(S^{\dagger})^{1/2}q\). Since \(q\in\operatorname{Range}(S)\), we have \(q=S^{1/2}r\), and \(\|r\|=\lambda(z)\). Therefore,
\[
g_i=\partial_iL(\theta)=\langle q,J_i\rangle
=\langle r,S^{1/2}J_i\rangle
=\sqrt{H_{ii}}\langle r,u_i\rangle,
\]
and similarly \(g_j=\sqrt{H_{jj}}\langle r,u_j\rangle\).

Choose \(s\in\{-1,1\}\) such that \(s\langle u_i,u_j\rangle=|\langle u_i,u_j\rangle|\). Then
\[
\|u_i-su_j\|^2=2-2|\langle u_i,u_j\rangle|=2(1-\rho_{ij}).
\]
Using \(\big||\xi|-|\eta|\big|\le |\xi-s\eta|\), valid for any \(\xi,\eta\in\mathbb{R}\) and \(s\in\{-1,1\}\), we get
\[
\begin{aligned}
\left|
\frac{|g_i|}{\sqrt{H_{ii}}}
-
\frac{|g_j|}{\sqrt{H_{jj}}}
\right|
&=
\left|
|\langle r,u_i\rangle|-|\langle r,u_j\rangle|
\right| \\
&\le
|\langle r,u_i-su_j\rangle| \\
&\le
\|r\|\,\|u_i-su_j\| \\
&=
\lambda(z)\sqrt{2(1-\rho_{ij})}.
\end{aligned}
\]
Note that \(|g_i|/\sqrt{H_{ii}}\le \lambda(z)\) and \(|g_j|/\sqrt{H_{jj}}\le \lambda(z)\). Hence
\[
\begin{aligned}
\left|
\frac{g_i^2}{H_{ii}}
-
\frac{g_j^2}{H_{jj}}
\right|
&\le
\left(
\frac{|g_i|}{\sqrt{H_{ii}}}
+
\frac{|g_j|}{\sqrt{H_{jj}}}
\right)
\left|
\frac{|g_i|}{\sqrt{H_{ii}}}
-
\frac{|g_j|}{\sqrt{H_{jj}}}
\right|  \\
&\le
2\lambda(z)^2\sqrt{2(1-\rho_{ij})}.
\end{aligned}
\]
The proof is complete.
\end{proof}

A possible convergence analysis for our method may be developed by adapting existing analyses of Adam-style adaptive optimizers. The original convergence argument for Adam \citep{kingma2015adam} was later shown to be incomplete by \citet{reddi2018convergence}, which introduced AMSGrad and established convergence guarantees under standard assumptions. More recently, convergence analyses have also been developed for quantized adaptive optimizers such as Adam4bit \citep{li2023fourbit}. The theorem above shows that, under the assumption that the ratios between squared gradients within each shared block are bounded, the shared second-moment estimate remains within a bounded factor of the corresponding per-parameter estimates. This suggests that convergence analyses developed for adaptive optimizers may potentially be extended to the shared-second-moment setting. Establishing such guarantees rigorously is left for future work. Since convergence analyses of adaptive optimizers typically rely on assumptions that may not hold in modern deep-learning settings, we focus in this paper on empirical validation. The theoretical result is intended primarily as motivation for sharing second moments among Hessian-aligned parameters.

\section{Empirical Evidence That High Hessian Implies Close Squared Gradients}

\label{app:empirical-evidence}

We trained small models and examined the relationship between the squared
gradients of selected weight entries within several parameter tensors and the
corresponding Hessian entries. The models are intentionally small, since
computing the Hessian for large models is difficult, but they all tell the same
empirical story. This matches the theory in
\cref{thm:hessian-gradient-ratio}: when the Hessian magnitude between two
weights is high, the ratio between their squared gradients tends to be close to
$1$.

In \cref{fig:gpt2_hessian_sqgrad_ratio}, as the Hessian entry moves away from
zero in either direction, the ratio of the squared gradients, shown on a log
scale, approaches one, marked by the horizontal dashed line. The same phenomenon
appears both when the two weights share the same first dimension and when they
do not.
\begin{figure}[H]
    \centering
    \begin{minipage}[t]{0.48\linewidth}
        \centering
        \begin{minipage}{0.48\linewidth}
            \centering
            \includegraphics[width=\linewidth]{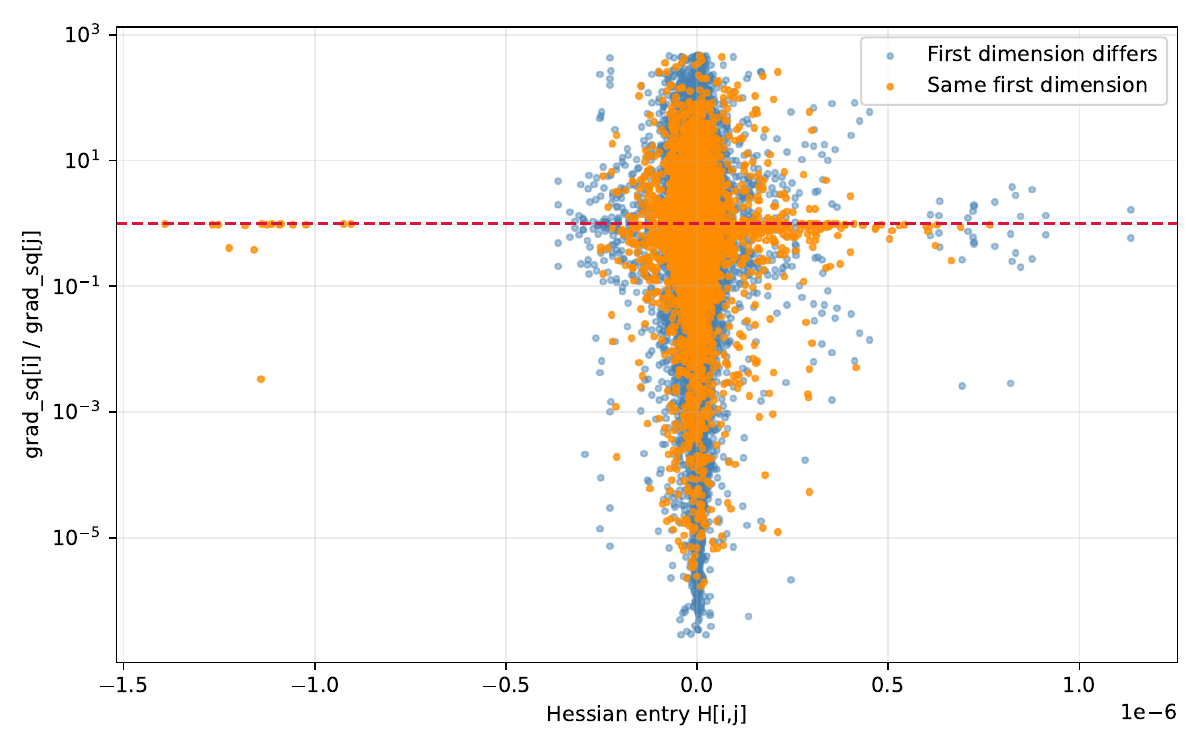}
            \textbf{(a)}
        \end{minipage}
        \hfill
        \begin{minipage}{0.48\linewidth}
            \centering
            \includegraphics[width=\linewidth]{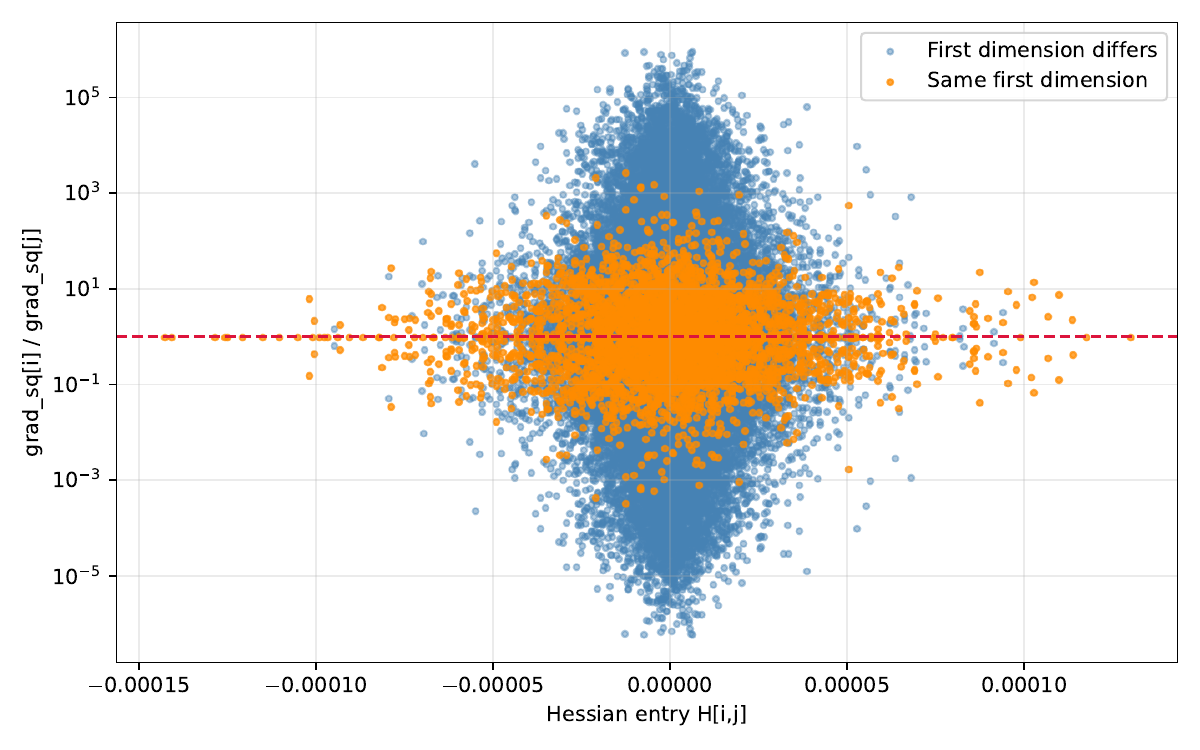}
            \textbf{(b)}
        \end{minipage}
        \caption{Squared-gradient ratio on a log scale vs. Hessian for GPT-toy at the beginning of
        training: (a) first-layer query matrix; (b) fourth-layer value matrix. As the Hessian entry moves away from zero, the squared-gradient ratio approaches one, as predicted by the theorem.}
        \label{fig:gpt2_hessian_sqgrad_ratio}
    \end{minipage}
    \hfill
    \begin{minipage}[t]{0.48\linewidth}
        \centering
        \begin{minipage}{0.48\linewidth}
            \centering
            \includegraphics[width=\linewidth]{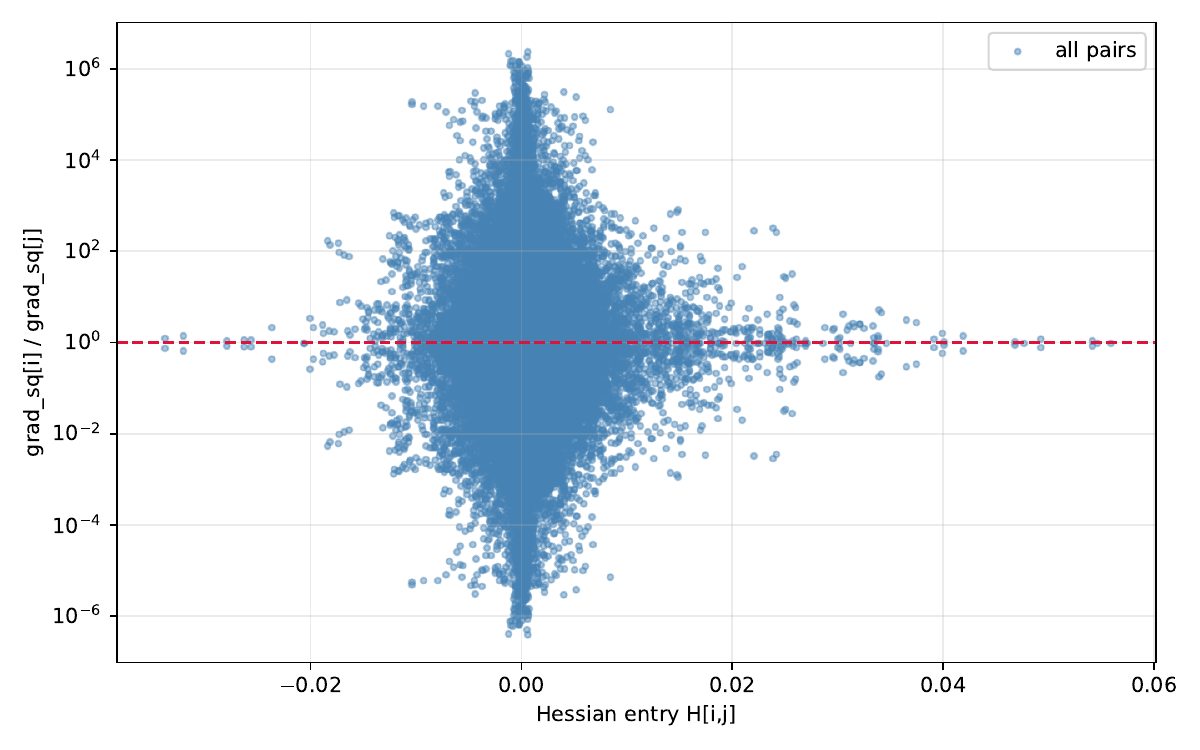}
            \textbf{(a)}
        \end{minipage}
        \hfill
        \begin{minipage}{0.48\linewidth}
            \centering
            \includegraphics[width=\linewidth]{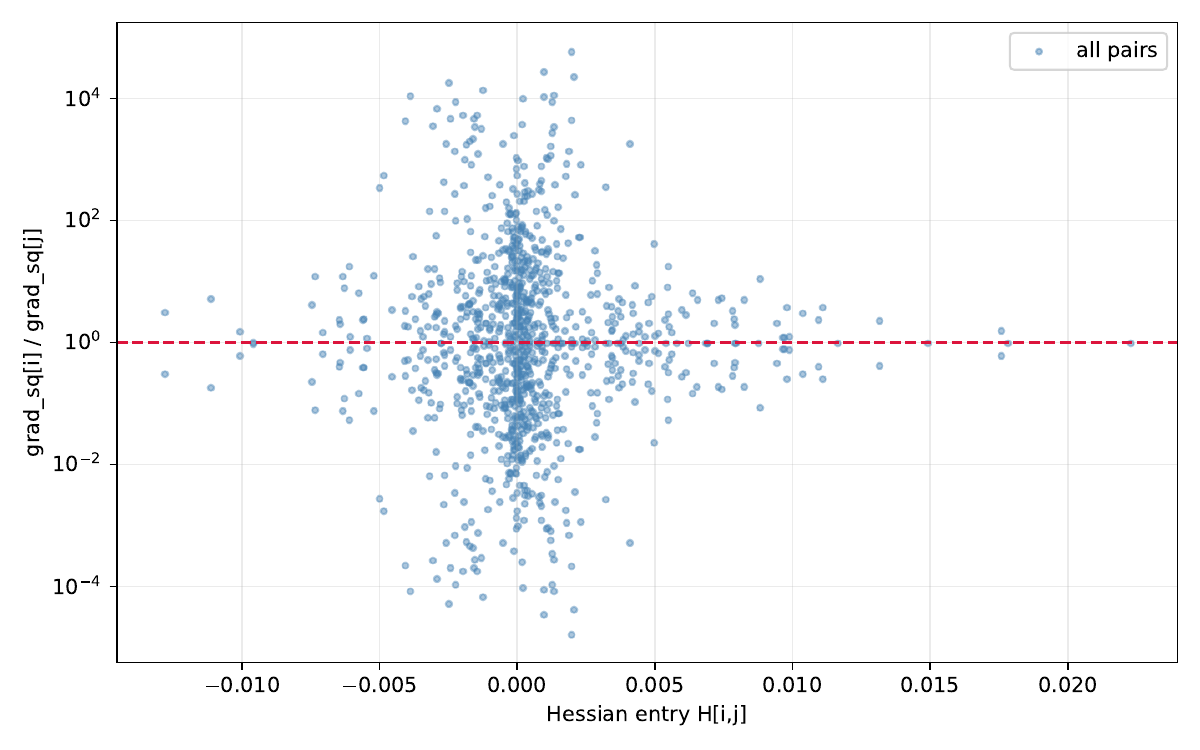}
            \textbf{(b)}
        \end{minipage}
        \caption{Squared-gradient ratio on a log scale vs. Hessian for CNN on MNIST at the
        beginning of training: (a) \texttt{conv1.weight}; (b) \texttt{conv1.bias}. The same phenomenon appears in both the weight and bias parameters, as in the transformer model.}
        \label{fig:cnn_mnist_hessian_sqgrad_ratio}
    \end{minipage}
\end{figure}
Another example, this time for a convolutional architecture, is shown in \cref{fig:cnn_mnist_hessian_sqgrad_ratio}. The same phenomenon appears there, even though the networks are very different.

\section{Extended Ablation Study}
\label{app:extended_ablation_study}
\subsection{Exact Dynamic Programming Quantization}
\label{app:exact-dp-quantization}

We compare three methods for quantizing gradients, which reduces the memory footprint of the first moment. 

Lloyd-Max is an iterative method for learning a codebook for a given distribution. However, it has no theoretical guarantee of convergence to the optimal codebook, and in practice it can get stuck in local minima. We also find that it is sensitive to the initialization strategy: linear initialization (i.e., a uniformly spaced initial codebook) and quantile initialization (i.e., initializing the codebook at distribution quantiles) can lead to substantially different results. As shown in \cref{fig:quantization_comparison}, linear initialization sometimes outperforms quantile initialization, while in other cases the reverse is true.

\begin{figure}[H]
    \centering
       \begin{minipage}{0.48\linewidth}
        \centering
        \includegraphics[width=\linewidth]{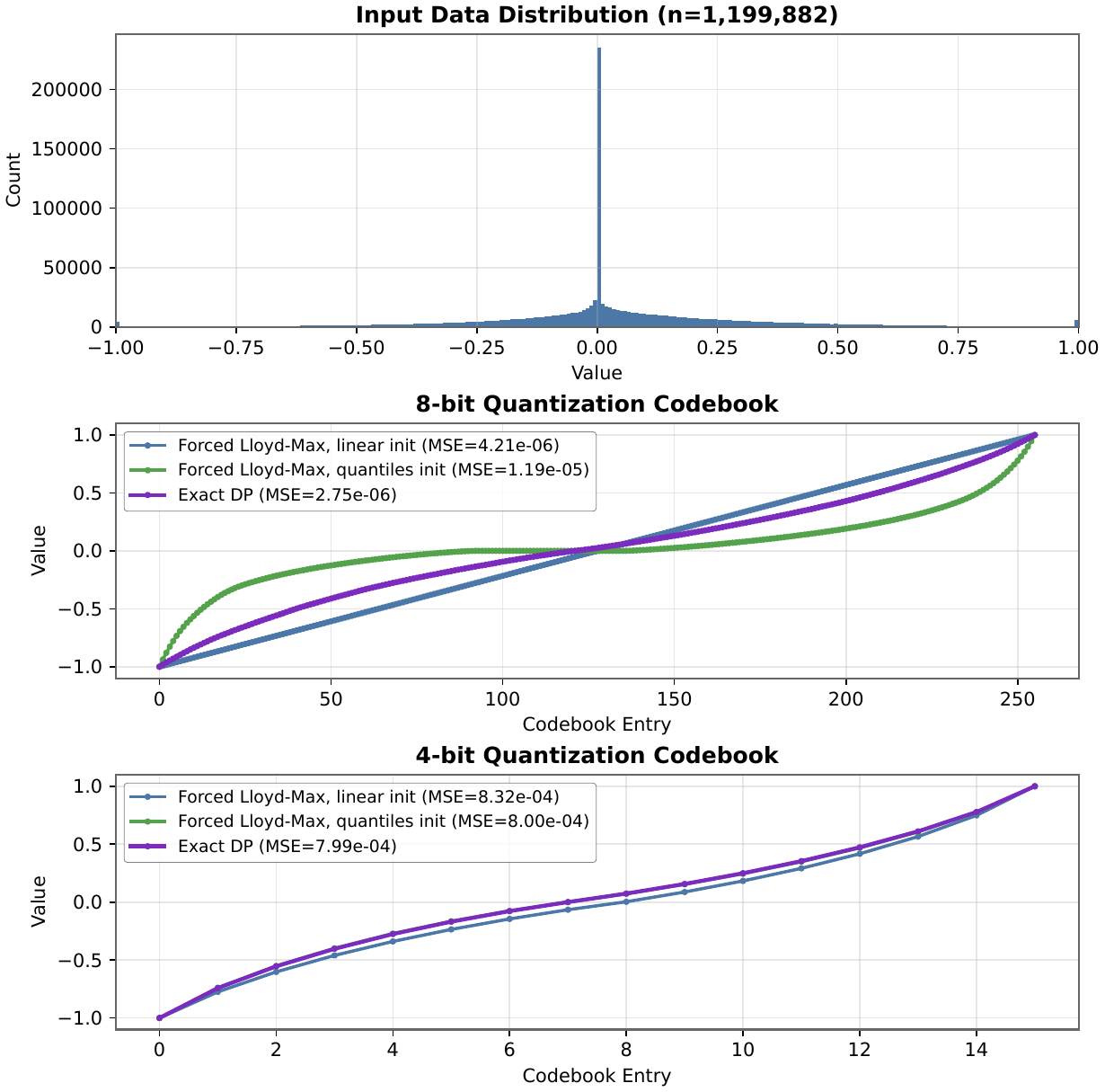}      
        
        (a) CNN on MNIST 
    \end{minipage}
    \hfill
    \begin{minipage}{0.48\linewidth}
        \centering
        \includegraphics[width=\linewidth]{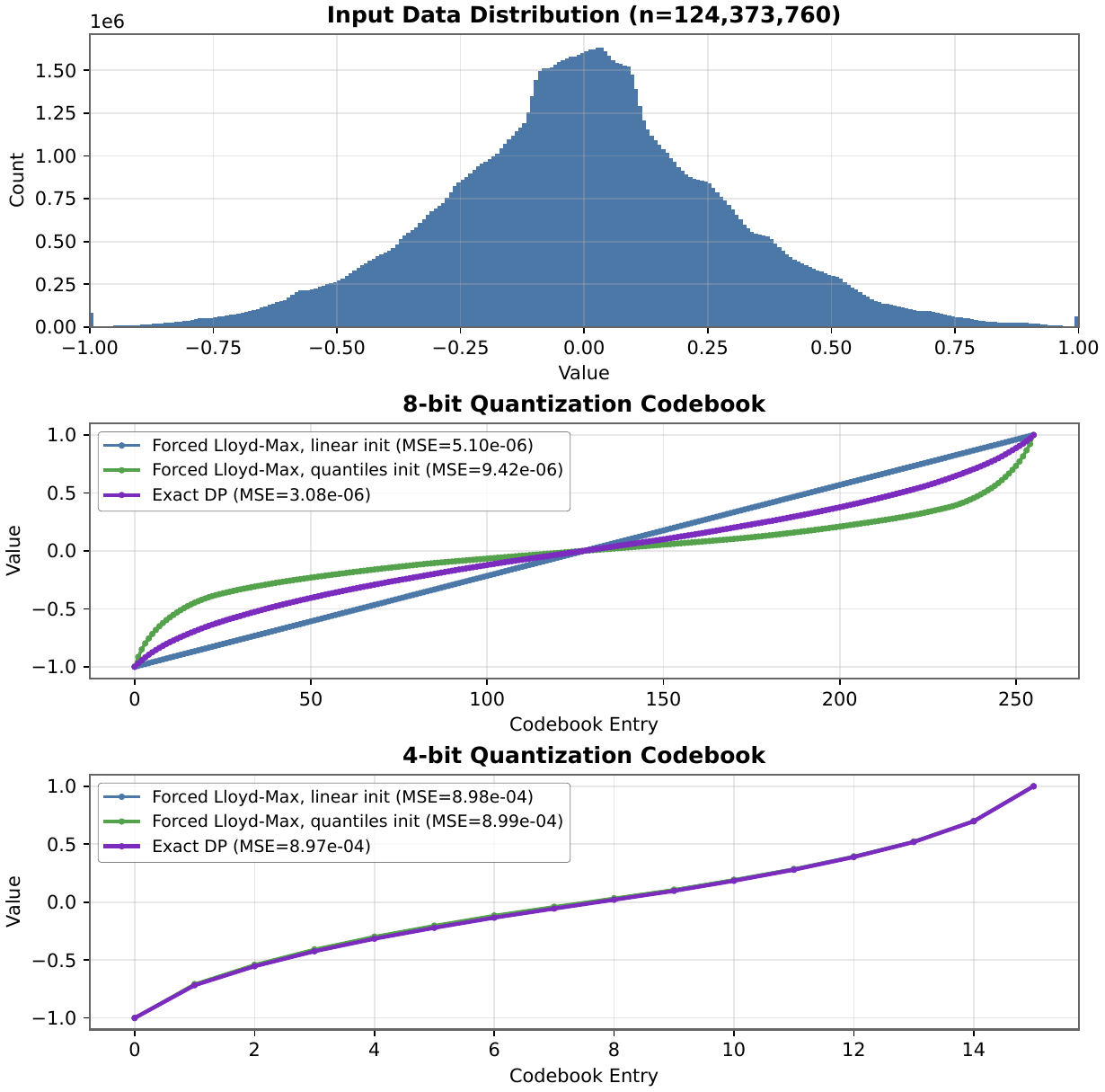}

        (b) GPT-2 on OpenWebText 
    \end{minipage}
    \caption{Comparison of gradient quantization methods. The top panels show the gradient distributions used as input to the quantization algorithms, and the bottom panels show the learned 8-bit and 4-bit codebooks for each method. Lloyd-Max is sensitive to the initialization scheme. In contrast, the exact dynamic programming method requires no initialization and achieves the lowest quantization error while remaining fast.}    
    \label{fig:quantization_comparison}
\end{figure}

We also compare against our exact histogram-based dynamic programming quantization algorithm. As expected, this method achieves the lowest quantization error because it is guaranteed to find the optimal codebook for the given distribution and quantization level, up to the histogram binning resolution. 

Finally, the optimal codebook found by our exact algorithm is nearly identical across models, datasets, and input distributions for the same quantization level. This is surprising: although the underlying distributions differ, they produce very similar optimal codebooks. 

As an ablation, we also compared the final performance of our exact quantization method against Lloyd-Max with different initialization strategies. For a CNN on MNIST over three consecutive seeds, the exact method achieved a final validation loss of \num{0.0269 +- 0.0009}, compared with \num{0.0283 +- 0.0020} for Lloyd-Max with quantile initialization and \num{0.0276 +- 0.0007} for Lloyd-Max with linear initialization.

\subsection{Stable Partitioning}

Because gradients are noisy, it is important to verify whether the partitioning inferred from the first-step gradients is stable throughout training. In particular, we ask whether rerunning the partitioning algorithm at later steps would recover the same period for each tensor. To evaluate this, we train the \gptsmall{} model and rerun the partitioning algorithm every ten steps, comparing the inferred period of each tensor with its first-step period. \Cref{fig:gpt2_partitioning_info_per_step_grid} reports the results for all non-LayerNorm tensors. We exclude LayerNorm tensors because they are first-order parameter tensors that scale latent dimensions independently and do not exhibit a noticeable block-diagonal Hessian structure or period. Each entry is marked in green when the inferred period is identical to the first-step period or is an integer multiple of it; the latter case is also acceptable because using the smaller first-step period refines the partition and cannot increase within-block heterogeneity. All other entries are marked in red. Overall, 99\% of the entries are green, indicating that the inferred period remains relatively stable throughout training. Since LayerNorm weights comprise a negligible fraction of the parameters, assigning them a period of 1 has minimal effect on memory consumption.

\begin{figure}[!htbp]
    \centering
    \includegraphics[width=\linewidth]{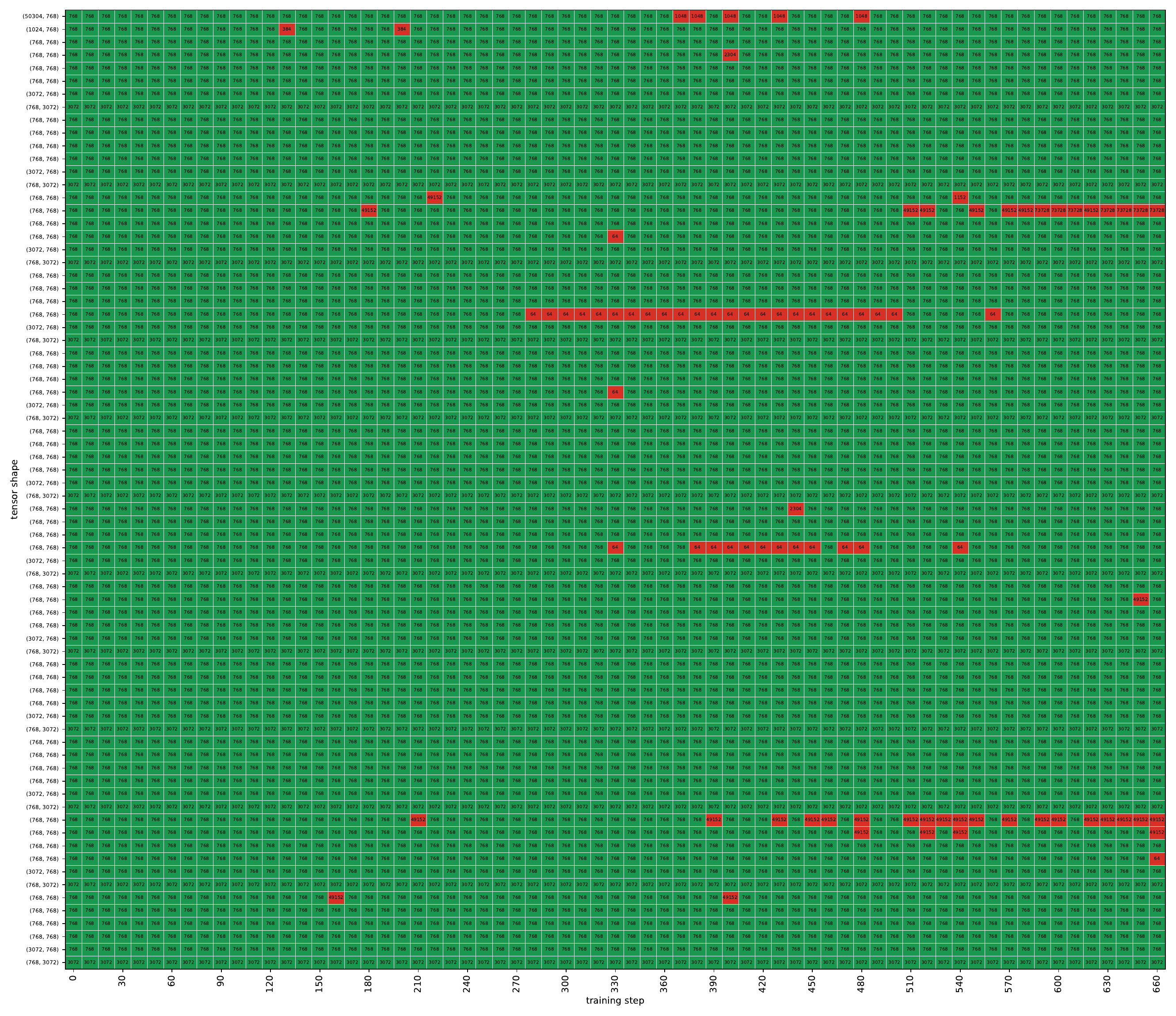}
    \caption{GPT-2 partitioning information across training steps.}
    \label{fig:gpt2_partitioning_info_per_step_grid}
\end{figure}

We further examine this behavior on a CNN trained on the MNIST dataset. \Cref{fig:mnist_partitioning_info} shows the partitioning result every 100 training steps. In this model, 94\% of the entries are green, indicating that the inferred partitioning is relatively stable over training. Thus, the partitioning obtained from the first step appears to be a good proxy for the partitioning that would be obtained later, and repeated repartitioning is likely to provide limited additional benefit. Across both models, it is notable that although gradient estimates are noisy, they exhibit largely consistent partitioning structure throughout training.

\begin{figure}[!htbp]
    \centering
    \includegraphics[width=\linewidth]{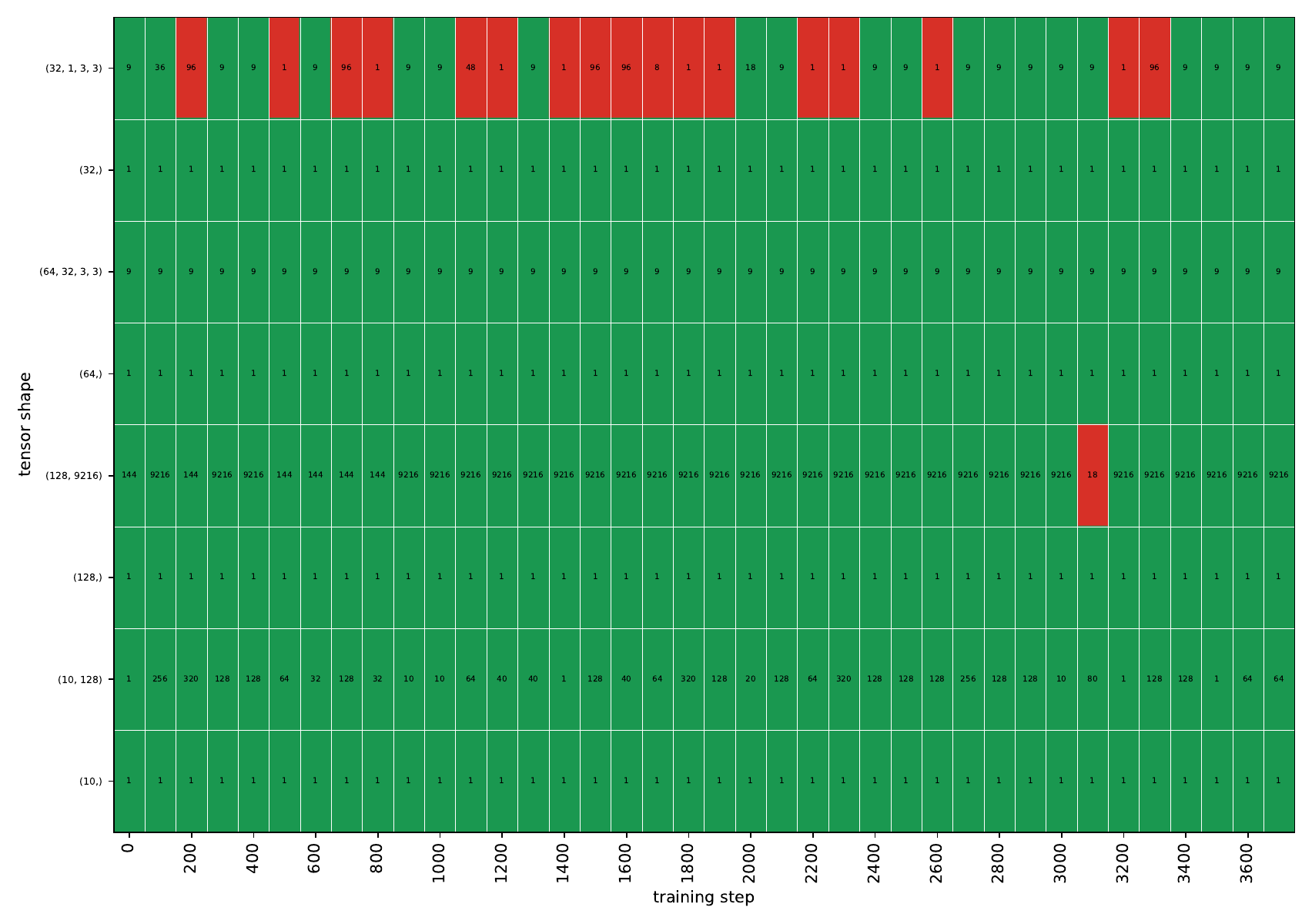}
    \caption{CNN-MNIST partitioning information across training steps.}
    \label{fig:mnist_partitioning_info}
\end{figure}

\section{\exacthistcodebook{}: Details}
\label{app:dp_details}
The dynamic program \cref{alg:exact_histogram_codebook} exploits the fact that one-dimensional optimal quantization
partitions the sorted input values into contiguous intervals. After histogramming
the normalized gradients, we discard empty bins and sort the remaining bin centers
$m_i$ with counts $c_i$.
For any interval of bins $b_\ell,\ldots,b_r$, the
weighted squared error $C(b_\ell,b_r,k')$ can be computed from three quantities:
$S_0=\sum_{i=b_\ell}^{b_r} c_i$,
$S_1=\sum_{i=b_\ell}^{b_r} c_i m_i$, and
$S_2=\sum_{i=b_\ell}^{b_r} c_i m_i^2$. These are obtained in $O(1)$ time for
any interval using prefix sums of $c_i$, $c_i m_i$, and $c_i m_i^2$. For an
unconstrained codebook entry, the optimal representative of the interval is its
weighted mean $\mu=S_1/S_0$, and the interval cost is
$S_2-S_1^2/S_0$. For the forced endpoints, the first and last codebook entries
are fixed to $-1$ and $1$, respectively, so the cost for a fixed center $a$ is
$S_2-2aS_1+a^2S_0$.

Let $D_{b_r,k'}$ be the minimum squared error for representing the first $b_r$
nonempty bins using $k'$ codebook entries. The recurrence considers the left
boundary $b_\ell$ of the final interval:
\[
    D_{b_r,k'} =
    \min_{b_\ell \le b_r}
    D_{b_\ell-1,k'-1} + C(b_\ell,b_r,k').
\]
This is correct by optimal substructure: in any optimal solution, the bins
assigned to the last codebook entry form a suffix interval
$b_\ell,\ldots,b_r$, and all earlier bins must themselves be optimally quantized
by the previous $k'-1$ entries. Conversely, combining any optimal prefix solution
with the best cost for the final interval gives a valid $k'$-entry quantizer.
Backtracking the minimizing split points therefore recovers the globally optimal codebook for the histogram. Since there are $O(bk)$ DP states and each
state checks $O(b)$ split points, the running time is $O(b^2k)$.

\section{Pretraining vs. Finetuning}
\label{sec:pretraining_vs_finetuning}
Pretraining is substantially more compute intensive than finetuning, and is therefore the primary focus of this work. Nevertheless, it is useful to examine whether the gradient structure that motivates our second-moment compression also appears during finetuning. We find that the squared-gradient structure in finetuning differs markedly from that observed in pretraining. During pretraining, the Hessian of a parameter tensor, or the squared-gradient proxy used in our method, reveals architectural structure: the periodicity underlying our second-moment sharing is visually apparent and can be detected reliably. In contrast, this periodicity is much less pronounced during finetuning. One possible explanation is that pretraining starts from randomly initialized weights, whereas finetuning begins from parameters already located near low-loss regions of the optimization landscape, which may alter the local curvature and gradient statistics. \Cref{fig:pre_vs_fine} illustrates this behavior for two architectures and three parameter tensors. The top row shows squared gradients at the beginning of pretraining, while the bottom row shows squared gradients during finetuning. 
When zoomed in (\cref{fig:qwen_zoomed}), the pretraining examples exhibit clear periodicity, in this case with period 2048, matching the hidden dimension of the corresponding networks. The finetuning examples, however, do not exhibit the same visible periodic structure. Since our second-moment sharing relies on periodicity across squared-gradient blocks, we focus this paper on the compute-intensive pretraining setting. By contrast, our first-moment quantization does not rely on grouping and applies equally well in both pretraining and finetuning.

\newcommand{\preFineSpySize}{1.5cm}
\newcommand{\preFineSpyMagnification}{5.0}
\newcommand{\preFineTopSpy}{\spy[red] on (4.7,3.4) in node at (1.7,4.7);}
\newcommand{\preFineBottomSpy}{\spy[red] on (4.7,0.66) in node at (1.7,2.0);}
\begin{figure}[H]
    \centering
       \begin{minipage}{0.31\linewidth}
        \centering
        \resizebox{\linewidth}{!}{%
            \begin{tikzpicture}[
                spy using outlines={circle,magnification=\preFineSpyMagnification,size=\preFineSpySize,connect spies}
            ]
                \node[anchor=south west, inner sep=0] at (0,0) {
                    \includegraphics[width=7cm]{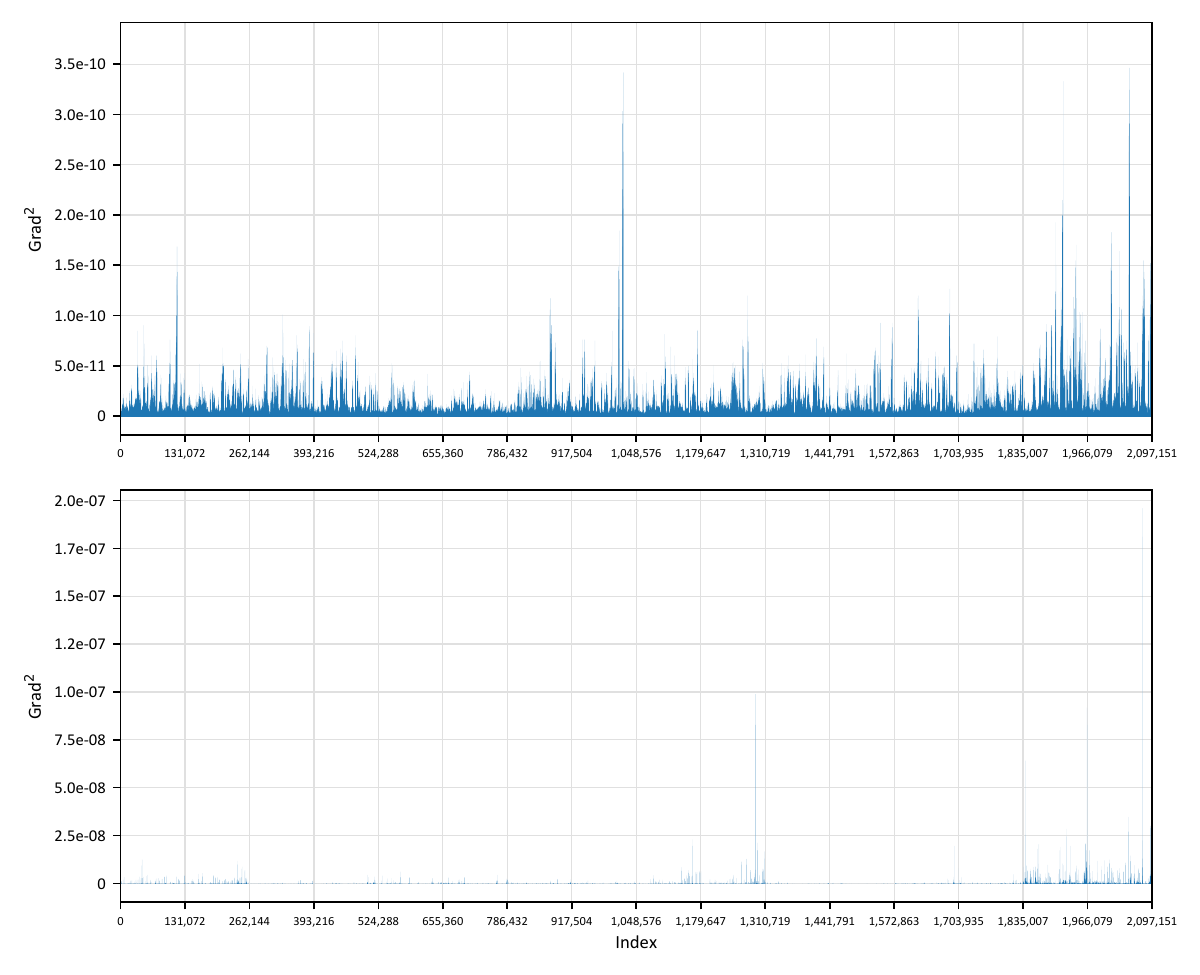}
                };
                \preFineTopSpy
                \preFineBottomSpy
            \end{tikzpicture}
        }

        (a) Qwen3-1.7B 27th-K
    \end{minipage}
    \hfill
    \begin{minipage}{0.31\linewidth}
        \centering
        \resizebox{\linewidth}{!}{%
            \begin{tikzpicture}[
                spy using outlines={circle,magnification=\preFineSpyMagnification,size=\preFineSpySize,connect spies}
            ]
            
                \node[anchor=south west, inner sep=0] at (0,0) {
                    \includegraphics[width=7cm]{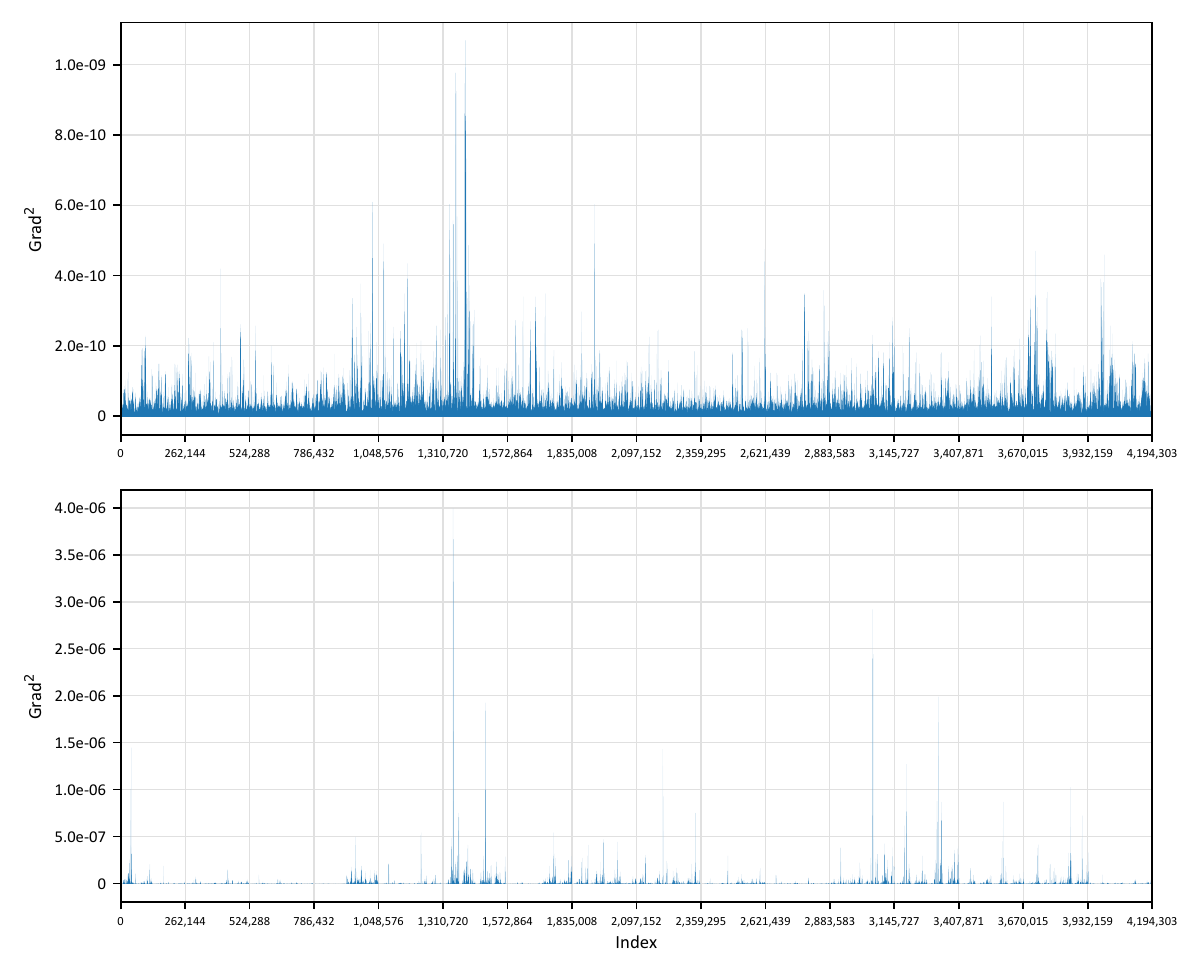}
                };
                \preFineTopSpy
                \preFineBottomSpy
            \end{tikzpicture}
        }

        (b) Llama-3.2-1B 7th-Q
    \end{minipage}
    \hfill
    \begin{minipage}{0.31\linewidth}
        \centering
        \resizebox{\linewidth}{!}{%
            \begin{tikzpicture}[
                spy using outlines={circle,magnification=\preFineSpyMagnification,size=\preFineSpySize,connect spies}
            ]
                \node[anchor=south west, inner sep=0] at (0,0) {
                    \includegraphics[width=7cm]{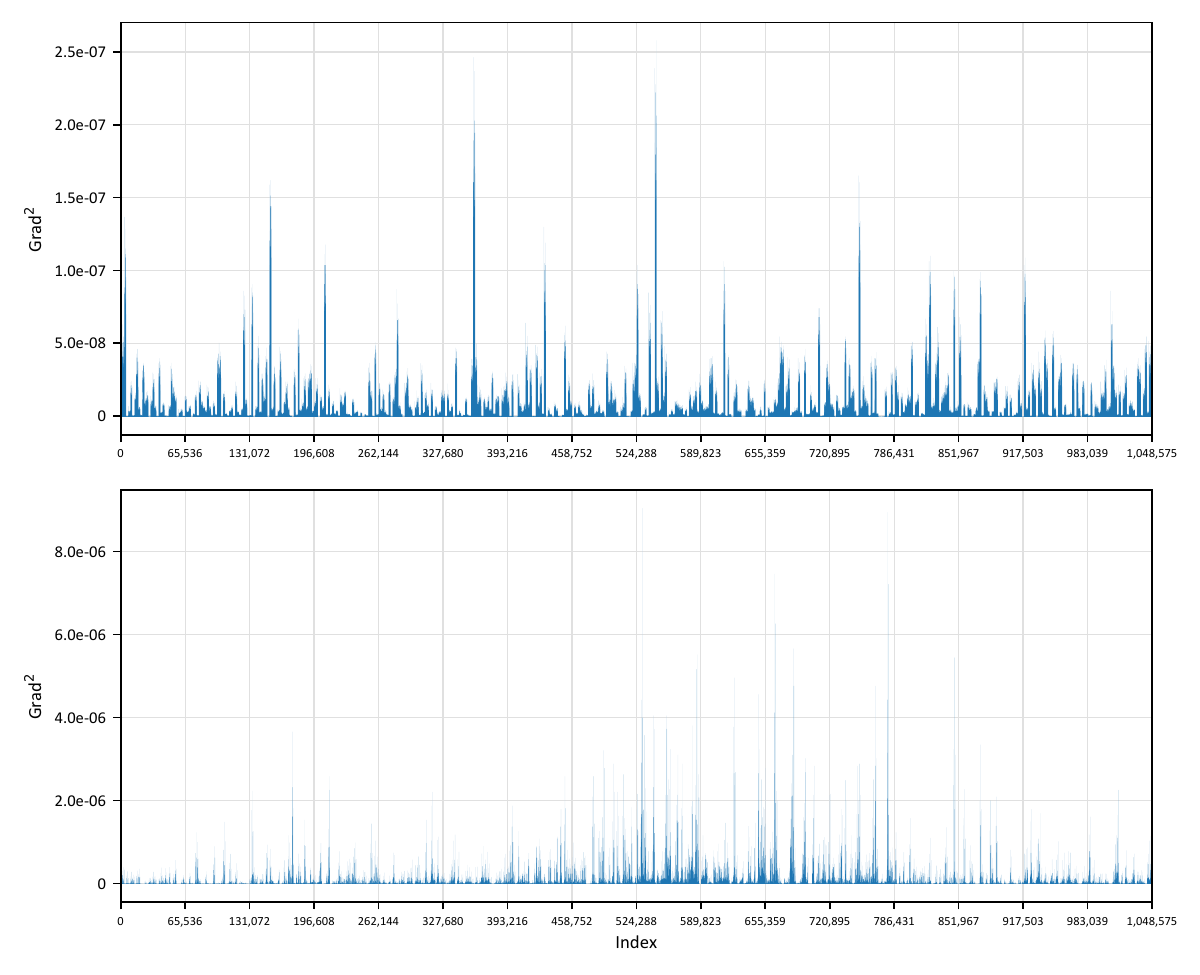}
                };
                \preFineTopSpy
                \preFineBottomSpy
            \end{tikzpicture}
        }

        (c) Llama-3.2-1B 13th-V
    \end{minipage}
    \caption{Squared-gradient structure in pretraining and finetuning. Each panel compares the same parameter tensor at the beginning of pretraining (top) and during finetuning (bottom). The magnified regions show that the periodic structure used for second-moment sharing is clear in pretraining but substantially less pronounced in finetuning.}    
    \label{fig:pre_vs_fine}
\end{figure}

\begin{figure}[H]
    \centering
    \includegraphics[width=0.6\linewidth]{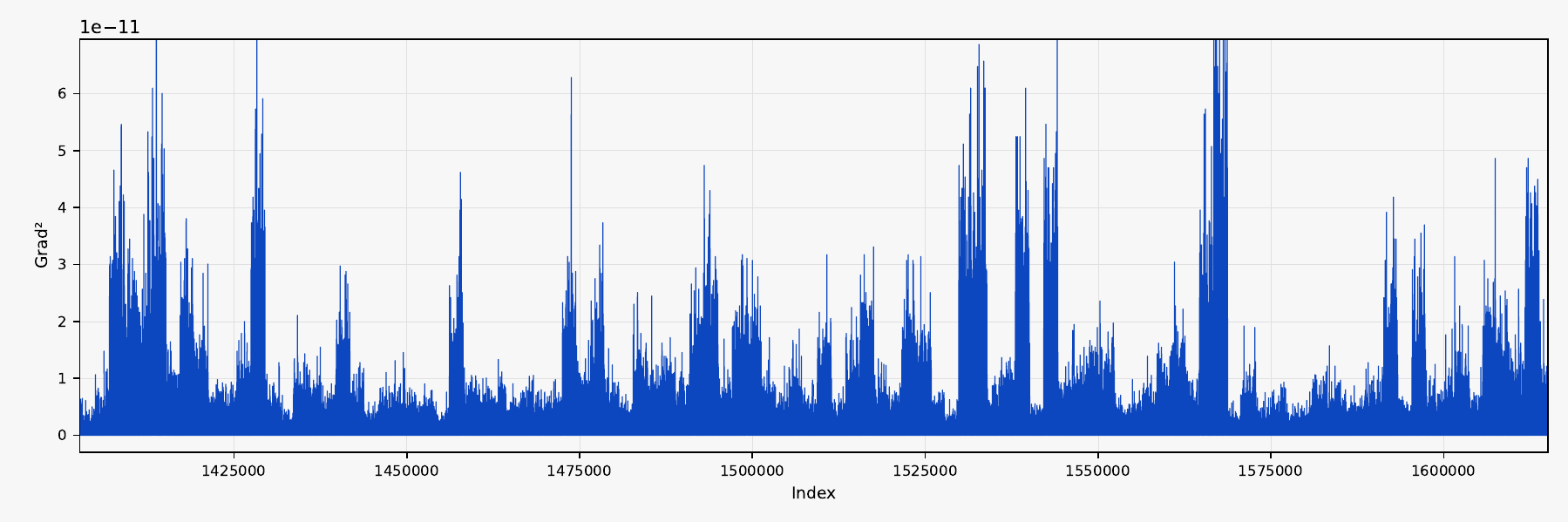}
    \caption{Zoomed-in squared-gradient structure for Qwen3-1.7B 27th layer, attention key parameter. Every 2048 parameters, corresponding to the hidden dimension of the network, exhibit similar squared-gradient magnitudes. This periodicity is used for second-moment sharing in \ourmethod.}
    \label{fig:qwen_zoomed}
\end{figure}

\section{Distributability of Optimizers}
\label{app:distributability}

Not every optimizer\footnote{For this comparison, we used
\texttt{torch.optim.AdamW} for AdamW,
\href{https://github.com/thu-ml/low-bit-optimizers}{\texttt{thu-ml/low-bit-optimizers}}
for Adam4bit,
\href{https://github.com/bitsandbytes-foundation/bitsandbytes/}{\texttt{bitsandbytes-foundation/bitsandbytes}}
for Adam8bit, \href{https://github.com/zyushun/Adam-mini}{\texttt{zyushun/Adam-mini}} for
Adam-mini, \href{https://github.com/jettify/pytorch-optimizer}{\texttt{jettify/pytorch-optimizer}}
for SM3 and Adafactor, and \texttt{torch.optim.Muon} for Muon.} is compatible with every distributed-training mechanism.
In some cases, the limitation is inherent to the optimizer design, while in
others it reflects implementation difficulty. As shown in
\cref{tab:optimizer-distributability}, AdamW and
\ourmethod are practical, general-purpose optimizers that support the common
methods for distributed training.

We evaluate four common distributed-training settings. DDP refers to standard
PyTorch \texttt{DistributedDataParallel}, where each rank holds a full model
replica and gradients are all-reduced. FSDP refers to PyTorch FSDP with full
sharding, applied after the model has been materialized on CUDA and before the
optimizer is constructed. FSDP2 Sharded Init refers to the TorchTitan-style
composable FSDP path, where the model is constructed on the meta device, sharded
with \texttt{fully\_shard}, materialized with \texttt{to\_empty(device="cuda")},
initialized after sharding, and then passed to the optimizer. DeepSpeed ZeRO-3
\citep{rajbhandari2020zero} refers to ZeRO stage 3 applied to a normally
materialized CUDA model.
\begin{table}[H]
    \centering
    \caption{Compatibility of optimizers with common distributed-training configurations.}
    \label{tab:optimizer-distributability}
    \begin{tabular}{lcccc}
        \hline
        Optimizer & DDP & FSDP & FSDP2 Sharded Init & DeepSpeed ZeRO-3 \\
        \hline
        AdamW & \textcolor{green}{\ding{51}} & \textcolor{green}{\ding{51}} & \textcolor{green}{\ding{51}} & \textcolor{green}{\ding{51}} \\
        Adam4bit & \textcolor{green}{\ding{51}} & \textcolor{green}{\ding{51}} & \textcolor{red}{\ding{55}} & \textcolor{green}{\ding{51}} \\
        Adam8bit & \textcolor{green}{\ding{51}} & \textcolor{green}{\ding{51}} & \textcolor{red}{\ding{55}} & \textcolor{green}{\ding{51}} \\
        Adam-mini & \textcolor{green}{\ding{51}} & \textcolor{red}{\ding{55}} & \textcolor{green}{\ding{51}} & \textcolor{green}{\ding{51}} \\
        SM3 & \textcolor{green}{\ding{51}} & \textcolor{green}{\ding{51}} & \textcolor{red}{\ding{55}} & \textcolor{green}{\ding{51}} \\
        Adafactor & \textcolor{green}{\ding{51}} & \textcolor{green}{\ding{51}} & \textcolor{red}{\ding{55}} & \textcolor{green}{\ding{51}} \\
        Gefen & \textcolor{green}{\ding{51}} & \textcolor{green}{\ding{51}} & \textcolor{green}{\ding{51}} & \textcolor{green}{\ding{51}} \\
        \hline
    \end{tabular}
\end{table}

\section{Extended Related Work}
\label{app:extended_related}

\paragraph{Second-Moment Compression.}
A complementary line of work reduces optimizer memory by compressing how second-moment statistics are stored. \textbf{Adafactor} \citep{shazeer2018adafactor} replaces a full $n\times m$ second-moment tensor with factored row and column accumulators (roughly $n+m$ state), then reconstructs elementwise scales from these marginals. \textbf{SM3} \citep{anil2019sm3} keeps per-axis accumulators (for matrices, row-wise and column-wise maxima), forms each element's preconditioner from the elementwise minimum across axes, and scales updates by the inverse square root of that value (with $\epsilon$ in practice), yielding a memory footprint similar to Adafactor. \textbf{Lion} \citep{chen2023symbolic} is a sign-momentum optimizer discovered through symbolic program search, and its design rationale is less transparent. However, these methods rely on structural compression assumptions (factorizability in Adafactor, axis-wise summarization in SM3) that may not match every layer or training regime, and they often require learning-rate retuning relative to AdamW. These limitations make them less convenient as drop-in replacements, and they have already been found to underperform \textbf{Adam-mini}.

\paragraph{Other Optimizers.}
\textbf{Muon} (MomentUm Orthogonalized by Newton-Schulz) was recently introduced as an optimizer for hidden-layer matrix parameters \citep{jordan2024muon}. Its update starts from momentum and then applies matrix orthogonalization using a few Newton--Schulz iterations, essentially equalizing singular values \citep{bjorck1971orthogonal,higham2008functions}. Muon uses about half the optimizer-state memory of AdamW because it tracks only the first-moment estimates (momentum). However, it is not a complete general-purpose drop-in optimizer: it is not intended for non-matrix tensors, does not work on CNNs or vector tensors, usually handles input/output embeddings and classifier heads with a different optimizer (such as AdamW), requires manually selecting which parameters are handled by Muon and which by a general-purpose optimizer, and introduces additional optimizer-specific hyperparameters such as the number of Newton--Schulz iterations and its polynomial coefficients. In addition, it may have difficulty fine-tuning models pre-trained with AdamW \citep{liu2025muon}, and its optimal learning rate is often different from that of AdamW.
Muon is complementary to \ourmethod and can work in tandem with it, as shown in \cref{app:extensions}.
\textbf{Apollo} \citep{zhu2024apollosgdlikememoryadamwlevel} projects gradients, only for parameters selected by the user, into a low-dimensional subspace and then performs AdamW iteration, thereby reducing optimizer state memory. However, similar to Adam-mini, its implementation relies on manual if/else rules based on module names (e.g., applying their method only to "attn" and "mlp" in the parameter name), while requiring additional optimizer-specific hyperparameters from the user beyond AdamW, including the desired low-rank dimension, scaling factors, and learning-rate retuning, making it less convenient as a drop-in replacement.
\textbf{Shampoo} \citep{gupta2018shampoo} is a structure-aware preconditioned optimizer that maintains matrix preconditioners for each tensor dimension, enabling richer second-order-style updates than diagonal adaptive methods. However, its goal is improved preconditioning rather than AdamW-style optimizer-state compression, and the additional matrix states and inverse-root computations make it less directly comparable to lightweight drop-in AdamW replacements.
\textbf{Sophia} \citep{liu2024sophia} is a stochastic second-order optimizer that divides an exponential moving average of gradients by an exponential moving average of lightweight diagonal Hessian estimates, and then applies elementwise clipping to the resulting update, for faster convergence. However, Sophia has the same optimizer-state memory footprint as AdamW, since it stores both first-moment and Hessian-preconditioner states, and it introduces additional optimizer-specific hyperparameters such as the Hessian update frequency and the clipping parameter; in contrast, we seek a general-purpose optimizer with lower memory footprint and no additional hyperparameters beyond AdamW.

\paragraph{Gradient Communication Compression.}
Another complementary direction reduces distributed-training communication rather than optimizer-state memory. \textbf{1-bit Adam} \citep{tang2021onebitadam} compresses Adam's gradient communication by transmitting one bit per coordinate, corresponding to the sign of the gradient, $\operatorname{sign}(g_t)$, together with error compensation, using a warmup stage to stabilize Adam's variance state before switching to compressed communication. However, each GPU still maintains Adam's full first- and second-moment states, so the per-GPU optimizer-state memory footprint remains roughly $2\times$ the parameter size. This line of work targets the network bottleneck in data-parallel training, whereas \ourmethod targets the memory footprint of optimizer states. Consequently, 1-bit Adam and \ourmethod are largely orthogonal: the communication compression of 1-bit Adam should compose naturally with the optimizer-state compression in \ourmethod, allowing the two techniques to work in tandem.

\paragraph{Zeroth-Order Optimizers.}
Zeroth-order (ZO) optimizers estimate search directions from function evaluations rather than backpropagated gradients, often using simultaneous perturbation or finite-difference estimators \citep{spall2002multivariate}. This makes them attractive for memory-constrained fine-tuning because they avoid storing activations for the backward pass. For example, \textbf{MeZO} \citep{malladi2023finetuning} adapts ZO-SGD to fine-tune language models using only forward passes, and follow-up methods such as \textbf{ZO-AdaMU} \citep{jiang2023zoadamuoptimizer} and \textbf{Sparse MeZO} \citep{liu2026sparsemezo} improve the stability, convergence speed, or parameter efficiency of this approach. However, ZO methods optimize through noisy low-dimensional projections of the gradient and therefore trade memory savings for weaker or less robust optimization compared with standard backpropagation-based AdamW in the general-purpose training regimes considered here. We therefore view them as an important but distinct direction: they reduce memory by changing the optimization oracle itself, whereas \ourmethod preserves the AdamW training interface and targets AdamW-level performance with a smaller optimizer-state footprint.

\paragraph{Distributed Training.}
Large-scale training commonly has several forms of parallelism. In this work, we focus on two common distributed-training schemes: DDP (Distributed Data Parallel) and FSDP (Fully Sharded Data Parallel). Data-parallel methods such as DDP \citep{li2020pytorchdistributed} replicate the model on each worker and synchronize gradients after the backward pass. Sharded data-parallel methods such as FSDP \citep{zhao2023pytorchfsdp} and DeepSpeed-ZeRO \citep{rajbhandari2020zero} reduce per-GPU memory by partitioning some or all of the parameters, gradients, and optimizer states across workers, at the cost of additional communication and implementation constraints. Model-parallel techniques, including tensor parallelism \citep{shoeybi2019megatron} and pipeline parallelism \citep{huang2019gpipe}, split the model computation itself across devices and are often combined with data parallelism for very large models. These techniques address memory and throughput at the system level, whereas \ourmethod reduces the optimizer state that each training configuration must store or shard. Therefore, optimizer-state compression is complementary to distributed-training schemes: it can reduce the memory pressure within DDP, lower the amount of optimizer state handled by sharded methods, and, as shown in our experiments, enable larger per-GPU microbatches in both FSDP and DDP settings.

\section{Additional Experiments}
\label{app:more_experiments}
We present additional graphs for the experiments shown in \cref{fig:gpt2_and_llama_train_curve}.
\ourmethod performs on par with AdamW across the settings.
In the ResNet18 and \gptsmall experiments, 4-bit quantization negatively affects performance. Furthermore, it can be seen that not all existing methods perform equally well with AdamW's learning rate. In the \gptsmall experiment, SM3 and Adafactor perform worse than AdamW when using the same learning rate. 
In the diffusion task DiT-XL/2 on ImageNet, \ourmethod reduces the AdamW memory state from 5720MiB to 722MiB.
\begin{figure}[H]
    \centering
    \begin{minipage}{0.31\linewidth}
        \centering
        \resizebox{\linewidth}{!}{%
            \begin{tikzpicture}[
                spy using outlines={circle,magnification=2.5,size=1.5cm,connect spies}
            ]
                \node[anchor=south west, inner sep=0] at (0,0) {
                    \includegraphics[width=7cm]{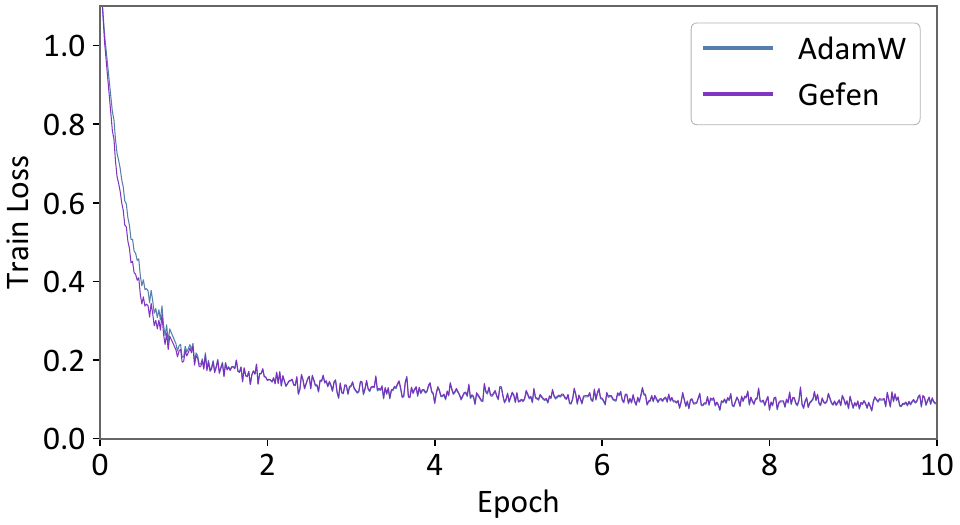}
                };
                \spy[red] on (6.30,1.00) in node at (3.5,2.6);
            \end{tikzpicture}
        }

        {\scriptsize (a) Loss: DDPM (Diffusion)}
    \end{minipage}
    \hfill
    \begin{minipage}{0.31\linewidth}
        \centering
        \resizebox{\linewidth}{!}{%
            \begin{tikzpicture}[
                spy using outlines={circle,magnification=2.7,size=1.4cm,connect spies}
            ]
                \node[anchor=south west, inner sep=0] at (0,0) {
                    \includegraphics[width=7cm]{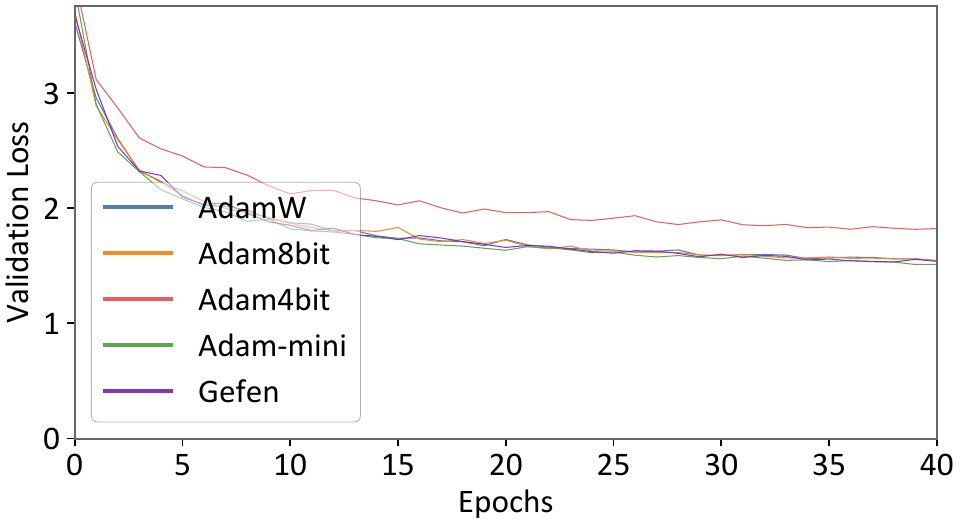}
                };
                \spy[red] on (6.30,1.89) in node at (4.45,3.0);
            \end{tikzpicture}
        }

        {\scriptsize (b) Loss: ResNet18 on ImageNet}
    \end{minipage}
    \hfill
    \begin{minipage}{0.31\linewidth}
        \centering
        \resizebox{\linewidth}{!}{%
            \begin{tikzpicture}[
                spy using outlines={circle,magnification=2.5,size=1.8cm,connect spies}
            ]
                \node[anchor=south west, inner sep=0] at (0,0) {
                    \includegraphics[width=7cm]{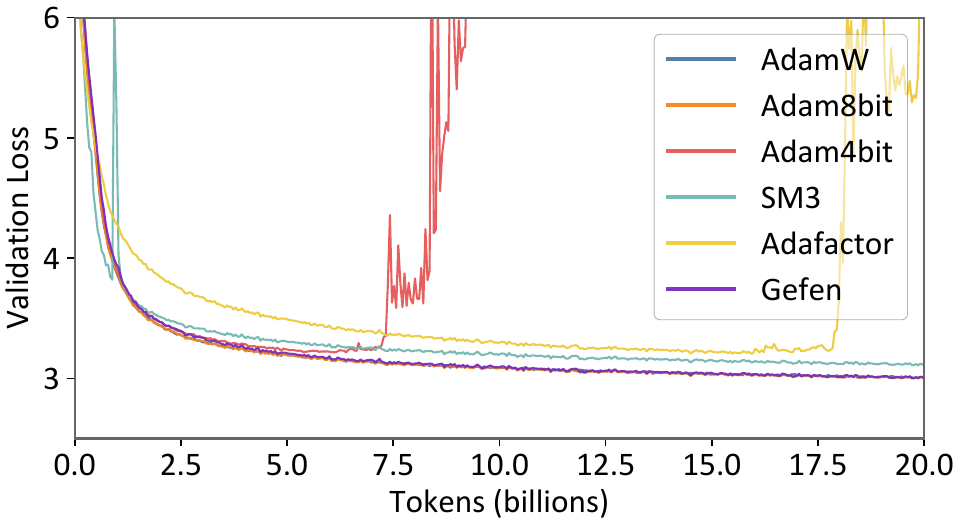}
                };
                \spy[red] on (3.5,1.2) in node at (2.0,2.6);
            \end{tikzpicture}
        }

        {\scriptsize (c) Loss: \gptsmall on OpenWebText}
    \end{minipage}
    \par\medskip
    \begin{minipage}{0.31\linewidth}
        \centering
        \resizebox{\linewidth}{!}{%
            \begin{tikzpicture}[
                spy using outlines={circle,magnification=2.5,size=1.6cm,connect spies}
            ]
                \node[anchor=south west, inner sep=0] at (0,0) {
                    \includegraphics[width=7cm]{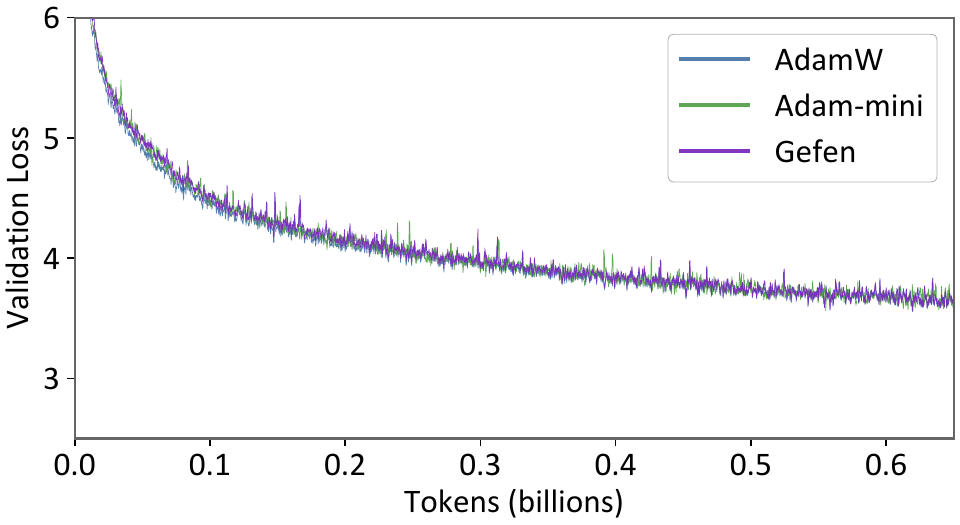}
                };
                \spy[red] on (6.2,1.7) in node at (4.0,2.8);
            \end{tikzpicture}
        }

        {\scriptsize (d) Loss: \llamaoneb on C4}
    \end{minipage}
    \hfill
    \begin{minipage}{0.31\linewidth}
        \centering
        \resizebox{\linewidth}{!}{%
            \begin{tikzpicture}[
                spy using outlines={circle,magnification=2.5,size=1.8cm,connect spies}
            ]
                \node[anchor=south west, inner sep=0] at (0,0) {
                    \includegraphics[width=7cm]{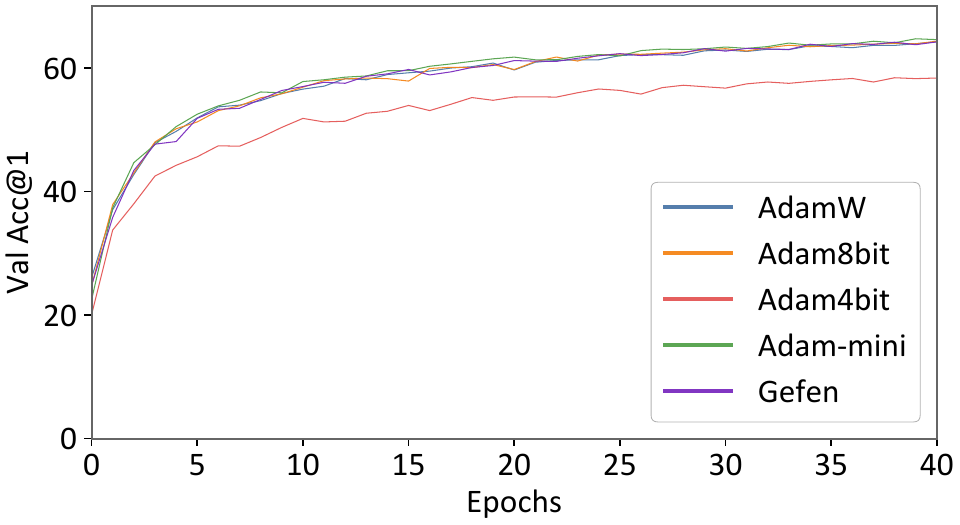}
                };
                \spy[red] on (5.9,3.3) in node at (3.0,1.9);
            \end{tikzpicture}
        }

        {\scriptsize (e) Acc: ResNet18 on ImageNet}
    \end{minipage}
    \hfill
    \begin{minipage}{0.31\linewidth}
        \centering
        \resizebox{\linewidth}{!}{%
            \begin{tikzpicture}[
                spy using outlines={circle,magnification=4.5,size=1.2cm,connect spies}
            ]
                \node[anchor=south west, inner sep=0] at (0,0) {
                    \includegraphics[width=7cm]{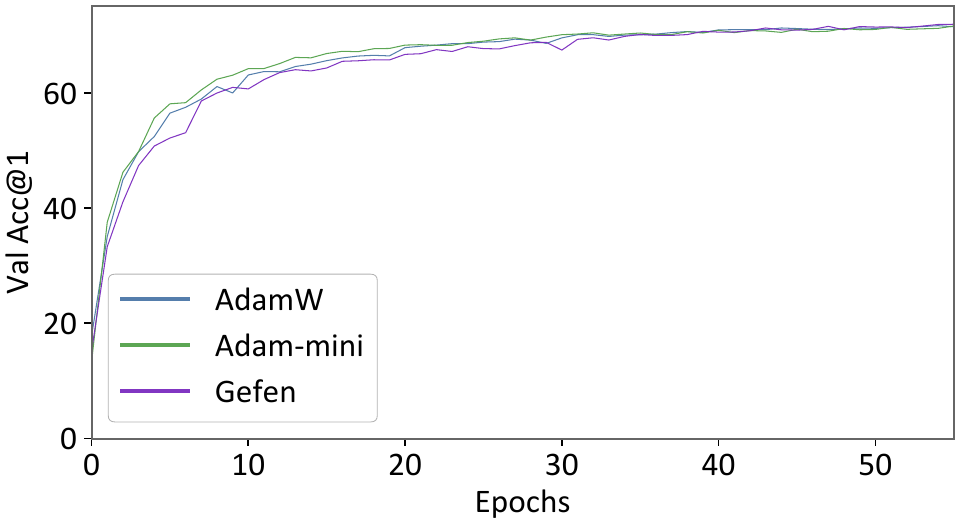}
                };
                \spy[red] on (6.5,3.59) in node at (4.0,1.6);
            \end{tikzpicture}
        }

        {\scriptsize (f) Acc: ResNet101 on ImageNet}
    \end{minipage}
    \par\medskip
    \begin{minipage}{0.31\linewidth}
        \centering
        \resizebox{\linewidth}{!}{%
            \begin{tikzpicture}[
                spy using outlines={circle,magnification=2.5,size=1.5cm,connect spies}
            ]
                \node[anchor=south west, inner sep=0] at (0,0) {
                    \includegraphics[width=7cm]{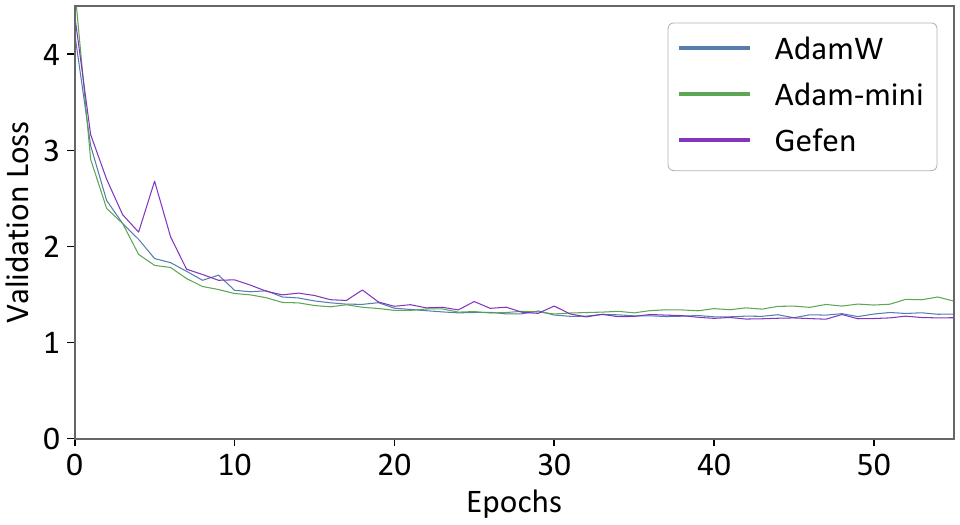}
                };
                \spy[red] on (6.30,1.50) in node at (3.5,2.6);
            \end{tikzpicture}
        }

        {\scriptsize (g) Loss: ResNet101 on ImageNet}
    \end{minipage}
    \hfill
    \begin{minipage}{0.31\linewidth}
        \centering
        \resizebox{\linewidth}{!}{%
            \begin{tikzpicture}[
                spy using outlines={circle,magnification=8.7,size=2.4cm,connect spies}
            ]
                \node[anchor=south west, inner sep=0] at (0,0) {
                    \includegraphics[width=7cm]{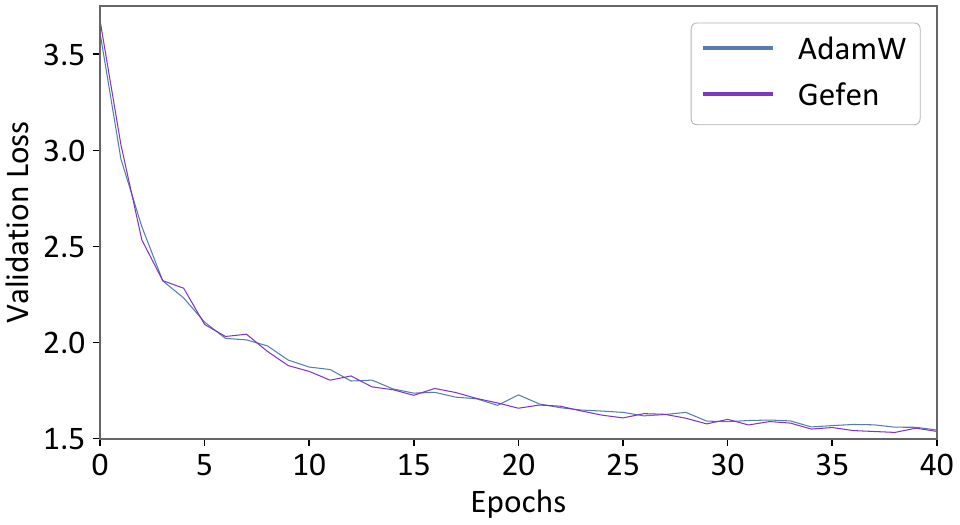}
                };
                \spy[red] on (6.65,0.80) in node at (3.39,2.3);
            \end{tikzpicture}
        }

        {\scriptsize (h) Loss: ResNet18 on ImageNet}
    \end{minipage}
    \hfill
    \begin{minipage}{0.31\linewidth}
        \centering
        \resizebox{\linewidth}{!}{%
            \begin{tikzpicture}
                \node[anchor=south west, inner sep=0] at (0,0) {
                    \includegraphics[width=7cm]{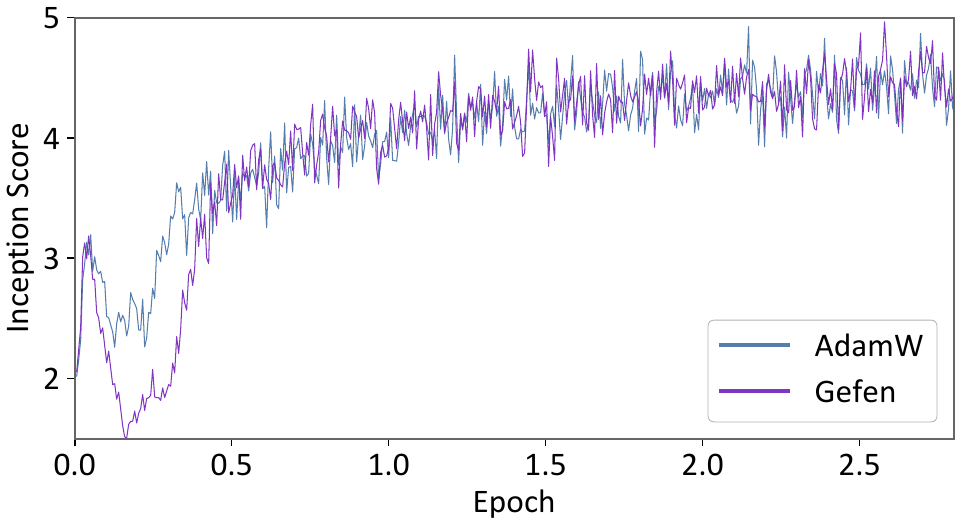}
                };
            \end{tikzpicture}
        }

        {\scriptsize (i) Inception Score: DiT-XL/2 on ImageNet}
    \end{minipage}
    \caption{Training curves of DDPM on Bollywood-Celebs, ResNet18 and ResNet101 on ImageNet, \gptsmall on OpenWebText, \llamaoneb on C4, and DiT-XL/2 on ImageNet. \ourmethod performs on par with AdamW across the settings.}
    \label{fig:ddpm_more_experiments}
\end{figure}

\section{Hyperparameters}
\label{app:hyperparameters}
\paragraph{Llama Pre-training.}
For \cref{fig:gpt2_and_llama_train_curve} we pre-trained \llamaoneb using the TorchTitan codebase~\citep{liang2025torchtitan} on the C4 dataset~\citep{raffel2020exploring}. We use weight decay coefficient $\lambda = 0.0$, $\epsilon = 10^{-8}$, $\beta_1 = 0.9$, $\beta_2 = 0.999$, and learning rate $=10^{-4}$. For the learning-rate schedule, we use warmup for 1\% of the total training steps, followed by linear decay. This matches the default scheduler used in the TorchTitan codebase. For \llamaoneb, we use sequence length 2048 and global batch size 256, with per-GPU batch size 2 and 64 gradient accumulation steps on 2 GeForce RTX 3090 GPUs. For Llama 8B, we use global batch size 128 and microbatch size 1 on 4 NVIDIA H100 GPUs with 80 GiB VRAM. In the same Llama 8B configuration, \ourmethod also fits on 4 NVIDIA RTX A6000 GPUs with 48 GiB VRAM, whereas AdamW runs out of memory.

\paragraph{GPT-2 training on OpenWebText.}
We use the nanoGPT codebase\footnote{\href{https://github.com/karpathy/nanoGPT/tree/master}{nanoGPT repository}} to train \gptsmall on OpenWebText (\cref{fig:gpt2_and_llama_train_curve}). We use the recommended hyperparameters: \texttt{seq\_len = 1024}, batch size $=480$, weight decay coefficient $\lambda=0.1$, $\epsilon=10^{-8}$, $\beta_1=0.9$, and $\beta_2=0.95$. We use a cosine-decay learning-rate schedule with 2000 warmup iterations. For GPT-2-small, we use the peak learning rate recommended by \citet{liu2024sophia}, which is reported to be optimal based on grid search. The chosen peak learning rate is $6\times10^{-4}$. The minimum learning rate is $3\times10^{-5}$. For SM3, this learning rate caused divergence, so we switched to a more stable learning rate of 0.225. For Adafactor and SM3, we added momentum with the same value for a fair comparison. In general, SM3 and Adafactor use different learning rates than AdamW, which makes them less suitable as drop-in replacements, but we report them for completeness. Better-tuned learning rates for SM3 and Adafactor may bring these optimizers closer to AdamW.

\paragraph{ResNet.}
For \cref{fig:resnet_imagenet_train_curve} we use the official PyTorch implementation codebase\footnote{\href{https://github.com/pytorch/vision/blob/main/torchvision/models/resnet.py}{PyTorch torchvision ResNet implementation}} to train ResNet18~\citep{he2016deep} on ImageNet~\citep{deng2009imagenet}. We use a cosine-decay learning-rate schedule, 90 epochs, $\beta_1=0.9$, $\beta_2=0.999$, $\epsilon=10^{-8}$, and weight decay $10^{-4}$. For ResNet18, we use batch size $=256$ and peak learning rate $=0.005$.

\paragraph{CNN for MNIST.}
We use the official PyTorch implementation codebase\footnote{\href{https://github.com/pytorch/examples/tree/main/mnist}{PyTorch MNIST example implementation}} to train a CNN on MNIST on an RTX PRO 6000 Blackwell Max-Q GPU (\cref{tab:memory_footprint,tab:mnist_validation_loss}). The network consists of two convolutional layers with ReLU activations,
followed by max pooling and dropout, and then two fully connected layers ending in a 10-class digit classifier. We use batch size $=64$, test batch size $=1000$, and train for 4 epochs. For all optimizers, we use $\beta_1=0.9$, $\beta_2=0.999$, and learning rate $=0.001$, except for SM3 and Adafactor, for which we report the best result from learning rates in $\{0.001, 0.0006\}$. We use a \texttt{StepLR} scheduler with step size $1$ and $\gamma=0.7$, decaying the learning rate after each epoch. Standard deviations are computed over three consecutive seeds. For Muon, which only supports two-dimensional parameter tensors, we presented each model parameter to the optimizer as a 2D view: convolutional kernels and other higher-dimensional tensors were flattened across all dimensions except the output dimension, while one-dimensional tensors were expanded with a singleton dimension.
It is possible that other learning rates would produce stronger results at 4 epochs for some methods, but our goal is to test AdamW compatibility; exhaustively tuning all learning rates for all methods is outside the scope of this paper.

\paragraph{DDPM.}
For \cref{fig:gpt2_and_llama_train_curve} we train an unconditional DDPM~\citep{ho2020denoising} on the Bollywood-Celebs dataset using the Hugging Face Diffusers codebase\footnote{\href{https://github.com/huggingface/diffusers}{Hugging Face Diffusers repository}}, specifically the \texttt{UNet2DModel}. The image size is $64$, and the training objective is to predict the added noise with mean-squared error. We use base channel width $64$, two layers per block, and U-Net channel multipliers $(1,2,4,8)$, corresponding to block widths $(64,128,256,512)$. The downsampling blocks are two standard convolutional blocks followed by two attention blocks, and the upsampling path uses the corresponding attention and convolutional blocks in reverse order. We use a diffusion process with $1000$ training timesteps and the squared-cosine beta schedule. We train for $10$ epochs with batch size $128$, learning rate $5\times10^{-5}$, cosine learning-rate decay, weight decay $0$, $\epsilon=10^{-8}$, $\beta_1=0.9$, and $\beta_2=0.999$.

\paragraph{DiT-XL/2 on ImageNet.}
For the DiT-XL/2 experiment in \cref{fig:gpt2_and_llama_train_curve,app:more_experiments}, we train a class-conditional DiT-XL/2 on ImageNet at $256\times256$ resolution using the Hugging Face Diffusers implementation. Images are center-cropped, randomly flipped, and encoded into $32\times32$ latents by the EMA variant of the Stable Diffusion VAE. The transformer uses patch size $2$, 28 layers, 16 attention heads with head dimension 72, and AdaLN-Zero conditioning over the 1000 ImageNet classes. We train for 10 epochs with effective global batch size 16, learning rate $10^{-4}$, weight decay $0$, $\epsilon=10^{-8}$, $\beta_1=0.9$, and $\beta_2=0.999$, using bfloat16 precision and no learning-rate schedule. The diffusion process uses 1000 timesteps with a linear beta schedule, and the model is trained to predict the added noise using mean-squared error.

\paragraph{Qwen3-4B pre-training.}
For \cref{fig:gpt2_and_llama_train_curve}, we pre-train Qwen3-4B~\citep{qwen3} on the FineWeb-Edu dataset~\citep{penedo2024fineweb} using the Axolotl codebase~\citep{axolotl}. We train with sequence length 2048, global batch size 128, and learning rate $3\times10^{-4}$. We use a cosine learning-rate schedule with 10 warmup steps. Training uses bfloat16 precision and FlashAttention~2. We use a per-GPU microbatch size of 1 and 32 gradient accumulation steps on four NVIDIA H100 GPUs.

\paragraph{Throughput.} 
For FSDP in \cref{fig:throughput} we used \llamaoneb on C4 with the default sequence length 2048, global batch size 256, and two RTX 3090 GPUs. In this setting, AdamW and Adam-mini support a microbatch size of 1, whereas \ourmethod supports a microbatch size of 2.
For DDP in \cref{fig:throughput} we used \gptoneb on OpenWebText with the default sequence length 1024, global batch size 120, and two RTX 3090 GPUs. In this setting, AdamW cannot fit a microbatch size of 1, Adam-mini supports a microbatch size of 1, and \ourmethod supports a microbatch size of 2.

\section{Extensions to \ourmethod}
\label{app:extensions}

\ourmethod modifies the AdamW state representation, and is therefore
complementary to several methods that reduce memory along different axes. We
list several natural extensions below.

\begin{enumerate}

    \item \textbf{Combining with Muon.} Muon, discussed in
    \cref{app:extended_related}, stores momentum for matrix parameters and then
    applies Newton--Schulz iterations to obtain a pseudo-orthogonalized update.
    Since Muon does not maintain an AdamW-style second moment accumulator, a
    natural combination with \ourmethod would focus on quantizing Muon's
    momentum buffer, while leaving the pseudo-orthogonalization step unchanged. A concrete implementation and evaluation of this combination is provided in \cref{app:gefen-muon}.
    
    \item \textbf{Combining with GaLore.} GaLore \citep{zhao2024galore}
    is a memory-efficient training method for LLMs. Given a gradient matrix
    $G_t$, GaLore projects it into a compact gradient $R_t$ using low-rank
    projection matrices, applies an optimizer in this compact space, and then projects the resulting update
    back to the original parameter space. Since this projection is orthogonal
    to how optimizer statistics are represented, one could combine \ourmethod
    with GaLore when GaLore is used with AdamW-style optimizer states, by
    compressing the first and second moment estimates maintained inside the
    projected optimization step.

    \item \textbf{Combining with Sophia.} Sophia, discussed in
    \cref{app:extended_related}, replaces AdamW's second moment estimate with a
    lightweight diagonal Hessian estimate and a clipped update. A Gefen variant could potentially quantize Sophia's gradient momentum and store its
    diagonal Hessian estimator compactly across blocks, while keeping Sophia's
    Hessian-based preconditioner and clipping rule unchanged.

\end{enumerate}

\section{Gefen-Muon}
\label{app:gefen-muon}
To demonstrate that \ourmethod can be combined with other optimization methods, we provide a PyTorch implementation with CUDA kernels for \ourmethod-Muon. 
\begin{wraptable}{r}{0.50\linewidth}
\centering
\caption{Memory footprint normalized by parameter memory of \ourmethod-Muon vs Muon.
}
\label{tab:memory_footprint_gefen_muon}
\resizebox{\linewidth}{!}{%
\begin{tabular}{lcccc}
\\
\hline
Optimizer & \multicolumn{2}{c}{CNN (1.2M)} & \multicolumn{2}{c}{GPT-2 (125M)} \\

  & Persistent mem & Peak mem & Persistent mem & \textbf{Peak mem} \\
\hline
Muon & x1.00 & x3.12 & x1.01 & x1.63 \\
\ourmethod-Muon & x0.26 & x3.21 & \textbf{x0.25} & \textbf{x1.01} \\
\hline
\end{tabular}
}
\end{wraptable}

The idea is straightforward: take \ourmethod and apply the orthogonalization step from Muon. This yields a one-line code-change drop-in replacement for Muon, while reducing Muon's persistent memory state by $4\times$ and decreasing peak memory by approximately $38\%$ compared with Muon as the model gets larger (\cref{tab:mnist_validation_loss_gefenmuon_vs_muon}). For the orthogonalization step, we use the official PyTorch implementation of Muon.

\begin{wraptable}[8]{r}{0.35\linewidth}
\centering
\caption{CNN on MNIST.
}
\label{tab:mnist_validation_loss_gefenmuon_vs_muon}
\tiny
\begin{tabular}{lc}
\\
\hline
Optimizer & Validation Loss $\downarrow$ \\
\hline
Muon & \numsub{0.0354 +- 0.0005} \\ %
\ourmethod-Muon & \numsub{0.0356 +- 0.0005} \\ %
\hline
\end{tabular}
\end{wraptable}

For simplicity, and to demonstrate that \ourmethod-Muon is a drop-in replacement, we use the same learning rate for Muon and \ourmethod-Muon. 

We also apply the same optimizer to all tensor types in the network by flattening tensors with more than two dimensions and expanding one-dimensional tensors into two dimensions.

As shown in \cref{fig:gpt2_train_curve_muon_vs_gefenmuon}, the training curves of \ourmethod-Muon and Muon are nearly identical, suggesting that \ourmethod-Muon preserves Muon's optimization behavior while substantially reducing memory usage.

\begin{figure}[H]
    \centering
    \resizebox{0.61\linewidth}{!}{%
        \begin{tikzpicture}[
            spy using outlines={circle,magnification=2.5,size=0.8cm,connect spies}
        ]
            \node[anchor=south west, inner sep=0] at (0,0) {
                \includegraphics[width=7cm]{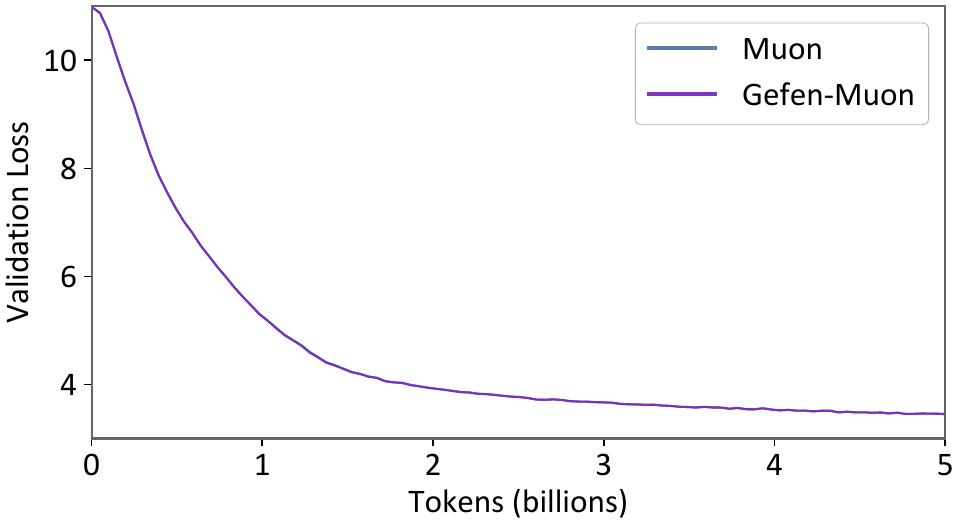}
            };
            \spy[red] on (6.5,0.85) in node at (3.0,2.6);
        \end{tikzpicture}
    }
    \caption{Loss: \gptsmall on OpenWebText.}
    \label{fig:gpt2_train_curve_muon_vs_gefenmuon}
\end{figure}

\end{document}